\documentclass[11pt]{article}
\pdfoutput=1
\usepackage{enumerate}
\usepackage{pdfsync}
\usepackage[OT1]{fontenc}

\usepackage[usenames]{color}
\usepackage{smile}
\usepackage[colorlinks,
            linkcolor=red,
            anchorcolor=blue,
            citecolor=blue
            ]{hyperref}
\usepackage{fullpage}
\usepackage{hyperref}
\usepackage[protrusion=true,expansion=true]{microtype}
\usepackage{pbox}
\usepackage{setspace}
\usepackage{tabularx}
\usepackage{float}
\usepackage{wrapfig,lipsum}
\usepackage{enumitem}
\usepackage{natbib}
\usepackage{multirow}

\makeatletter
\newcommand*{\rom}[1]{\expandafter\@slowromancap\romannumeral #1@}
\makeatother

\usepackage{pifont}
\usepackage{smile}
\usepackage{pifont}
\usepackage{enumitem}

\def \layerLen {L}
\def \nnweight{\btheta}

\def \dd{\text{d}}
\def \optBellop{\cT}
\def \proj{\Pi}
\def \gradlin{\mb}
\def \BB{\mathbb{B}}
\def \LipschNN{\ell_{m,\layerLen}}
\def \vector{\text{vec}}

\newcommand{\cmark}{\ding{51}}%
\newcommand{\xmark}{\ding{55}}%

\newcommand{\la}{\langle}
\newcommand{\ra}{\rangle}

\newcommand{\ifcomments}{\iftrue}

\allowdisplaybreaks
\begin{document}
\title{\huge  A Finite-Time Analysis of  Q-Learning
with Neural Network Function Approximation}
\author
{
	Pan Xu\thanks{Department of Computer Science, University of California, Los Angeles, Los Angeles, CA 90095; e-mail: {\tt panxu@cs.ucla.edu}} 
	~~~and~~~
	Quanquan Gu\thanks{Department of Computer Science, University of California, Los Angeles, Los Angeles, CA 90095; e-mail: {\tt qgu@cs.ucla.edu}}
}
\date{}

\maketitle
\begin{abstract}
    Q-learning with neural network function approximation (neural Q-learning for short) is among the most prevalent deep reinforcement learning algorithms. Despite its empirical success, the non-asymptotic convergence rate of neural Q-learning remains virtually unknown. In this paper, we present a finite-time analysis of a neural Q-learning algorithm, where the data are generated from a Markov decision process and the action-value function is approximated by a deep ReLU neural network. We prove that neural Q-learning finds the optimal policy with $O(1/\sqrt{T})$ convergence rate if the neural function approximator is sufficiently overparameterized, where $T$ is the number of iterations. To our best knowledge, our result is the first finite-time analysis of neural Q-learning under non-i.i.d. data assumption. 
\end{abstract}
\section{Introduction}
Q-learning has been shown to be one of the most important and effective learning strategies in Reinforcement Learning (RL) over the past decades \citep{watkins1992q,schmidhuber2015deep,sutton2018reinforcement}, where the agent takes an action based on the action-value function (a.k.a., Q-value function) at the current state. Recent advance in deep learning has also enabled the application of Q-learning algorithms to large-scale decision problems such as mastering Go \citep{silver2016mastering-go,silver2017mastering}, robotic motion control \citep{levine2015learning,kalashnikov2018scalable} and autonomous driving \citep{shalev2016auto-driving,schwarting2018planning}. In particular, the seminal work by \citet{mnih2015human} introduced the Deep Q-Network (DQN) to approximate the action-value function and achieved a superior performance versus a human expert in playing Atari games, which triggers a line of research on deep reinforcement learning such as Double Deep Q-Learning \citep{van2016deep} and Dueling DQN \citep{wang2016dueling}. 

Apart from its widespread empirical success in numerous applications, the convergence of Q-learning and temporal difference (TD) learning algorithms has also been extensively studied in the literature \citep{jaakkola1994convergence,baird1995residual,tsitsiklis1997analysis,perkins2002existence,melo2008analysis,mehta2009q,liu2015finite,bhandari2018finite,lakshminarayanan2018linear,zou2019finite}. However, 
the convergence guarantee of deep Q-learning algorithms remains a largely open problem. The only exceptions are \citet{yang2019theoretical} which studied the fitted Q-iteration (FQI) algorithm \citep{riedmiller2005neural,munos2008finite} with action-value function approximation based on a sparse ReLU network, 
and \citet{cai2019neural} which studied the global convergence of Q-learning algorithm with an i.i.d. observation model and action-value function approximation based on a two-layer neural network. The main limitation of the aforementioned work is the unrealistic assumption that all the data used in the Q-learning algorithm are sampled i.i.d. from a fixed stationary distribution, which fails to capture the practical setting of neural Q-learning. 

In this paper, in order to bridge the gap between the empirical success of neural Q-learning and the theory of conventional Q-learning (i.e., tabular Q-learning, and Q-learning with linear function approximation), we study the non-asymptotic convergence of a neural Q-learning algorithm under non-i.i.d. observations. In particular, we use a deep neural network with the ReLU activation function to approximate the action-value function. In each iteration of the neural Q-learning algorithm, it updates the network weight parameters using the 
 temporal difference (TD) error and the gradient of the neural network function. Our work extends existing finite-time analyses for TD learning \citep{bhandari2018finite} and Q-learning \citep{zou2019finite}, from linear function approximation to deep neural network based function approximation. Compared with the very recent theoretical work for neural Q-learning \citep{yang2019theoretical,cai2019neural}, our analysis relaxes the non-realistic i.i.d. data assumption and applies to neural network approximation with arbitrary number of layers. 
 Our main contributions are summarized as follows
\begin{itemize}[leftmargin=*]
    \item We establish the first finite-time analysis of Q-learning with deep neural network function approximation when the data are generated from a Markov decision process (MDP). 
    We show that, when the network is sufficiently wide, neural Q-learning converges to the optimal action-value function up to the approximation error of the neural network function class.
    \item We establish an $O(1/\sqrt{T})$  convergence rate of neural Q-learning to the optimal Q-value function up to the approximation error, where $T$ is the number of iterations. This convergence rate matches the one for TD-learning with linear function approximation and constant stepsize \citep{bhandari2018finite}. Although we study a more challenging setting where the data are non-i.i.d. and the neural network approximator has multiple layers, our convergence rate also matches the $O(1/\sqrt{T})$ rate proved in \citet{cai2019neural} with i.i.d. data and a two-layer neural network approximator.
\end{itemize}
To sum up, we present a comprehensive comparison between our work and the most relevant work in terms of their respective settings and convergence rates in Table \ref{table:comparison}.

\noindent\textbf{Notation} We denote $[n]=\{1,\ldots,n\}$ for $n\in\NN^+$. $\|\xb\|_2$ is the Euclidean norm of a vector $\xb\in\RR^d$. For a matrix $\Wb\in\RR^{m\times n}$, we denote by $\|\Wb\|_2$ and $\|\Wb\|_F$ its operator norm and Frobenius norm respectively. We denote by $\vector(\Wb)$ the vectorization of $\Wb$, which converts $\Wb$ into a column vector. For a semi-definite matrix $\bSigma\in\RR^{d\times d}$ and a vector $\xb\in\RR^d$, $\|\xb\|_{\bSigma}=\sqrt{\xb^{\top}\bSigma\xb}$ denotes the Mahalanobis norm. 
We reserve the notations $\{C_i\}_{i=0,1,\ldots}$ to represent universal positive constants that are independent of problem parameters. 
The specific value of $\{C_i\}_{i=1,2,\ldots}$ can be different line by line. We write $a_n=O(b_n)$ if $a_n\leq Cb_n$ for some constant $C>0$ and $a_n=\tilde O(b_n)$ if $a_n=O(b_n)$ up to some logarithmic terms of $b_n$.

\begin{table*}[t]
\caption{Comparison with existing finite-time analyses of Q-learning.}
\label{table:comparison}
\begin{center}
\begin{tabular}{lcccc}
\toprule
\multicolumn{1}{c}{}  &\multicolumn{1}{c}{ Non-i.i.d.}   
&\multicolumn{1}{c}{ Neural Approximation}  &\multicolumn{1}{c}{ Multiple Layers} &\multicolumn{1}{c}{ Rate} 
\\ \midrule 
\citet{bhandari2018finite}  &\cmark      
& \xmark   & \xmark & $O(1/T)$\\
\citet{zou2019finite}       &\cmark    
& \xmark   & \xmark  & $O(1/T)$ \\
\citet{chen2019Performance} &\cmark    
& \xmark   & \xmark  & $O(\log T/T)$ \\
\citet{cai2019neural}       & \xmark 
& \cmark & \xmark  & $O(1/\sqrt{T})$\\
This paper                  & \cmark   
& \cmark   & \cmark & $O(1/\sqrt{T})$ \\
\bottomrule
\end{tabular}
\end{center}
\end{table*}

\section{Related Work}
Due to the huge volume of work in the literature for TD learning and Q-learning algorithms, we only review the most relevant work here.
\\\textbf{Asymptotic analysis} The asymptotic convergence of TD learning and Q-learning algorithms has been well established in the literature \citep{jaakkola1994convergence,tsitsiklis1997analysis,konda2000actor,borkar2000ode,ormoneit2002kernel,melo2008analysis,devraj2017zap}. In particular, \citet{tsitsiklis1997analysis} specified the precise conditions for TD learning with linear function approximation to converge and gave counterexamples that diverge. \citet{melo2008analysis} proved the asymptotic convergence of Q-learning with linear function approximation from standard ODE analysis, and identified a critic condition on the relationship between the learning policy and the greedy policy that ensures the almost sure convergence.
\\\textbf{Finite-time analysis} The finite-time analysis of the convergence rate for Q-learning algorithms has been largely unexplored until recently. In specific, \citet{dalal2018finite,lakshminarayanan2018linear} studied the convergence of TD(0) algorithm with linear function approximation under i.i.d. data assumptions and constant step sizes. Concurrently, a seminal work by \citet{bhandari2018finite} provided a unified framework of analysis for TD learning under both i.i.d. and Markovian noise assumptions with an extra projection step. The analysis has been extended by \citet{zou2019finite} to SARSA and Q-learning algorithms with linear function approximation. More recently, \citet{srikant2019finite}  established the finite-time convergence for TD
learning algorithms with linear function approximation and a constant step-size without the extra projection step under non-i.i.d. data assumptions through carefully choosing the Lyapunov function for the associated ordinary differential equation of TD update. A similar analysis was also extended to Q-learning with linear function approximation \citep{chen2019Performance}. \citet{hu2019characterizing} further provided a unified analysis for a class of TD learning algorithms using Markov jump linear system.
\\\textbf{Neural function approximation} Despite the empirical success of DQN, the theoretical convergence of Q-learning with deep neural network approximation is still missing in the literature. Following the recent advances in the theory of deep learning for overparameterized networks \citep{jacot2018neural,chizat2018global,du2019gradient_iclr,du2019gradient_icml,allen2019convergence,allenzhu2019learning,zou2019stochastic,arora2019fine,cao2019generalization,zou2019improved,cai2019gram}, two recent work by \citet{yang2019theoretical} and \citet{cai2019neural} proved the convergence rates of fitted Q-iteration and Q-learning with a sparse multi-layer ReLU network and two-layer neural network approximation respectively, under i.i.d. observations. 

\section{Preliminaries}
A discrete-time Markov Decision Process (MDP) is denoted by a tuple $\cM=(\cS, \cA, \cP, r, \gamma)$. $\cS$ and $\cA$ are the sets of all states and actions respectively. $\cP: \cS \times \cA \rightarrow \cP(\cS)$ is the transition kernel such that $\cP(s'|s,a)$ gives the probability of transiting to state $s'$ after taking action $a$ at state $s$. $r: \cS \times \cA \rightarrow [-1,1]$ is a deterministic reward function. $\gamma \in (0,1)$ is the discounted factor. A policy $\pi : \cS \rightarrow \cP(\cA)$ is a function mapping a state $s \in \cS$ to a probability distribution $\pi(\cdot | s)$ over the action space. Let $s_t$ and $a_t$ denote the state and action at time step $t$. Then the transition kernel $\cP$ and the policy $\pi$ determine a Markov chain $\{s_t\}_{t=0,1,\ldots}$ For any fixed policy $\pi$, 
its associated value function $V^\pi:\cS\rightarrow\RR$ is defined as the expected total discounted reward:
\begin{align}
    \textstyle{V^\pi(s) = \EE[ \sum_{t=0}^\infty \gamma^t r(s_t, a_t) | s_0 = s],}\quad\forall s\in\cS.\notag
\end{align}
The corresponding action-value function $Q^\pi: \cS \times \cA \rightarrow \RR$ is defined as
\begin{align}
    Q^\pi(s,a) &= \EE[ \sum_{t=0}^\infty \gamma^t r(s_t, a_t) | s_0 = s, a_0 = a]=r(s,a)+\gamma\int_{\cS} V^{\pi}(s')\cP(s'|s,a)\dd s',\notag
\end{align}
for all $s\in\cS,a\in\cA$. The optimal action-value function $Q^*$ is defined as $Q^*(s,a)=\sup_{\pi}Q^{\pi}(s,a)$ for all $(s,a)\in\cS\times\cA$. Based on $Q^*$, the optimal policy $\pi^*$ can be derived by following the greedy algorithm such that $\pi^*(a|s) = 1$ if $Q(s,a) = \max_{b \in \cA} Q^*(s, b)$ and $\pi^*(a|s)=0$ otherwise.
We define the optimal Bellman operator $\optBellop$ as follows
\begin{align}\label{eq:def_opt_bellman}
    \optBellop Q(s,a) = r(s,a) + \gamma \cdot \EE \big[\textstyle{\max_{b \in \cA}} Q(s', b)|s' \sim \cP(\cdot | s,a)\big].
\end{align}
It is worth noting that the optimal Bellman operator $\optBellop$ is $\gamma$-contractive in the sup-norm and $Q^*$ is the unique fixed point of $\optBellop$ \citep{bertsekas1995dynamic}.

\section{The Neural Q-Learning Algorithm}
In this section, we start with a brief review of Q-learning with linear function approximation. Then we will present the neural Q-learning algorithm. 
\subsection{Q-Learning with Linear Function Approximation}
In many reinforcement learning algorithms, the goal is to estimate the action-value function $Q(\cdot,\cdot)$, which can be formulated as minimizing the mean-squared Bellman error (MSBE) \citep{sutton2018reinforcement}:
\begin{align}\label{eq:def_msbe}
    \min_{Q(\cdot,\cdot)}\EE_{\mu,\pi,\cP}\big[(\optBellop Q(s,a)-Q( s, a))^2\big],
\end{align}
where state $s$ is generated from the initial state distribution $\mu$ and action $a$ is chosen based on a fixed learning policy $\pi$. 
To optimize \eqref{eq:def_msbe},  Q-learning iteratively updates the action-value function using the Bellman operator in \eqref{eq:def_opt_bellman}, i.e., $Q_{t+1}(s,a)=\optBellop Q_t(s,a)$ for all $(s,a)\in\cS\times\cA$. However, due to the large state and action spaces, whose cardinalities, i.e., $|\cS|$ and $|\cA|$, can be infinite for continuous problems in many applications, the aforementioned update is impractical. To address this issue, a linear function approximator is often used \citep{szepesvari2010algorithms,sutton2018reinforcement}, where the action-value function is assumed to be parameterized by a linear function, i.e., $Q(s,a;\nnweight)=\phi(s,a)^{\top}\nnweight$ for any $(s,a)\in\cS\times\cA$, where  $\phi:\cS\times\cA\rightarrow\RR^d$ maps the state-action pair to a $d$-dimensional vector, and $\nnweight\in\bTheta\subseteq\RR^d$ is an unknown weight vector. The minimization problem in \eqref{eq:def_msbe} then turns to minimizing the MSBE over the parameter space $\bTheta$. 

\subsection{Neural Q-Learning}
Analogous to Q-learning with linear function approximation, the action-value function can also be approximated by a deep neural network to increase the representation power of the approximator. Specifically, we define a $L$-hidden-layer neural network as follows
\begin{align}\label{def:dnn}
    f(\nnweight;\xb)=\sqrt{m}\Wb_{\layerLen}\sigma_{\layerLen}(\Wb_{\layerLen-1}\cdots\sigma(\Wb_1\xb)\cdots),
\end{align}
where $\xb\in\RR^d$ is the input data, $\Wb_1\in\RR^{m\times d}$, $\Wb_{\layerLen}\in\RR^{1\times m}$ and $\Wb_l\in\RR^{m\times m}$ for $l=2,\ldots,\layerLen-1$, $\nnweight=(\vector(\Wb_1)^{\top},\ldots,\vector(\Wb_{\layerLen})^{\top})^{\top}$ is the concatenation of the vectorization of all parameter matrices, and $\sigma(x)=\max\{0,x\}$ is the ReLU activation function. Then, we can parameterize $Q(s,a)$ using a deep neural network as $Q(s,a;\nnweight)=f(\nnweight;\phi(s,a))$, where $\nnweight \in \bTheta$ and $\phi:\cS\times\cA\rightarrow\RR^d$ is a feature mapping. Without loss of generality, we assume that $\|\phi(s,a)\|_2\leq 1$ in this paper. Let $\pi$ be an arbitrarily stationary policy. The MSBE minimization problem in \eqref{eq:def_msbe} can be rewritten in the following form
\begin{align}\label{eq:def_problem_MSBE}
    \min_{\btheta\in\bTheta}\EE_{\mu,\pi,\cP}\big[(Q(s,a;\nnweight)-\optBellop Q(s,a;\nnweight))^2\big].
\end{align} 
Recall that the optimal action-value function $Q^*$ is the fixed point of Bellman optimality operator $\cT$ which is $\gamma$-contractive. Therefore $Q^*$ is the unique global minimizer of \eqref{eq:def_problem_MSBE}. 

The nonlinear parameterization of $Q(\cdot,\cdot)$ turns the MSBE in \eqref{eq:def_problem_MSBE} to be highly nonconvex, which imposes difficulty in finding the global optimum $\nnweight^*$. To mitigate this issue, we will approximate the solution of \eqref{eq:def_problem_MSBE} by project the Q-value function into some function class parameterized by $\nnweight$, which leads to minimizing the mean square projected Bellman error (MSPBE):
\begin{align}\label{eq:def_problem_MSPBE}
    \min_{\btheta\in\bTheta}\EE_{\mu,\pi,\cP}\big[(Q(s,a;\nnweight)-\proj_{\cF}\optBellop Q(s,a;\nnweight))^2\big],
\end{align}
where $\cF=\{Q(\cdot,\cdot;\nnweight):\nnweight\in\bTheta\}$ is some function class parameterized by $\nnweight\in\bTheta$, and  $\Pi_{\cF}$ is a projection operator. 
Then the neural Q-learning algorithm updates the weight parameter $\nnweight$ using the following descent step: $\nnweight_{t+1}=\nnweight_t-\eta_t\gb_t(\nnweight_t)$, where the gradient term $\gb_t(\nnweight_t)$ is defined as
\begin{align}\label{eq:def_gradient}
    \gb_t(\btheta_t)&=\nabla_{\btheta}f(\btheta_t;\phi(s_t,a_t))\big(f(\btheta_t;\phi(s_{t},a_t))-r_t-\textstyle{\gamma\max_{b\in\cA}f(\btheta_t;\phi(s_{t+1},b))}\big)\notag\\
    &\stackrel{\text{def}}{=}\Delta_t(s_t,a_t,s_{t+1};\nnweight_t)\nabla_{\btheta}f(\btheta_t;\phi(s_t,a_t)),
\end{align}
and $\Delta_t$ is the temporal difference (TD) error. It should be noted that $\gb_t$ is not the gradient of the MSPBE nor an unbiased estimator for it. The details of the neural Q-learning algorithm are displayed in Algorithm \ref{alg:neural_q_learning}, where $\nnweight_0$ is randomly initialized, and the constraint set is chosen to be $\bTheta=\BB(\nnweight_0,\omega)$, which is defined as follows
\begin{align}\label{eq:def_constrain_set}
    \BB(\nnweight_0,\omega)\stackrel{\text{def}}{=}\big\{\nnweight=(\vector(\Wb_1)^{\top},\ldots,\vector(\Wb_{\layerLen})^{\top})^{\top}:
    \|\Wb_l-\Wb_l^{(0)}\|_F\leq\omega,l=1,\ldots,\layerLen\big\}
\end{align}
for some tunable parameter $\omega$. It is easy to verify that $\|\nnweight-\nnweight'\|_2^2=\sum_{l=1}^{\layerLen}\|\Wb_l-\Wb_l'\|_F^2$.
\begin{algorithm}[t]
\caption{Neural Q-Learning with Gaussian Initialization} 
\label{alg:neural_q_learning}
\begin{algorithmic}[1]
\STATE \textbf{Input:} learning policy $\pi$, learning rate $\{\eta_t\}_{t=0,1,\ldots}$, discount factor $\gamma$, randomly generate the entries of $\Wb_{l}^{(0)}$ from $N(0,1/m)$, $l=1,\ldots,m$
\STATE\textbf{Initialization:} $\nnweight_0=(\Wb_0^{(1){\top}},\ldots,\Wb_{0}^{(\layerLen){\top}})^{\top}$ 
\FOR{$t=0,\ldots,T-1$}
\STATE Sample data $(s_t,a_t,r_t,s_{t+1})$ from policy $\pi$
\STATE $\Delta_t=f(\btheta_t;\phi(s_{t},a_t))-(r_t+\gamma\max_{b\in\cA}f(\btheta_t;\phi(s_{t+1},b)))$
\STATE $\gb_t(\btheta_t)=\nabla_{\btheta}f(\btheta_t;\phi(s_t,a_t))\Delta_t$
\STATE $\btheta_{t+1}=\proj_{\bTheta}(\btheta_t-\eta_t \gb_t(\btheta_t))$
\ENDFOR 
\end{algorithmic} 
\end{algorithm}

\section{Convergence Analysis of Neural Q-Learning}
In this section, we provide a finite-sample analysis of neural Q-learning. Note that
the optimization problem in \eqref{eq:def_problem_MSPBE} is nonconvex. We focus on finding a surrogate action-value function in the neural network function class that well approximates $Q^*$.

\subsection{Approximate Stationary Point in the Constrained Space}

To ease the presentation, we abbreviate $f(\nnweight;\phi(s,a))$ as $f(\nnweight)$ when no confusion arises. We define the function class $\cF_{\bTheta,m}$ as a collection of all local linearization of $f(\nnweight)$ at the initial point $\nnweight_0$
\begin{align}
    \cF_{\bTheta,m}=\{f(\nnweight_0)+\la\nabla_{\nnweight} f(\nnweight_0),\nnweight-\nnweight_0\ra:\nnweight\in\bTheta\},
\end{align}
where $\bTheta$ is a constraint set. Following to the local linearization analysis in \citet{cai2019neural}, 
we define the approximate stationary point of Algorithm \ref{alg:neural_q_learning} as follows.
\begin{definition}[\cite{cai2019neural}]\label{defin:approx_stat}
A point $\nnweight^*\in\bTheta$ is said to be the approximate stationary point of Algorithm \ref{alg:neural_q_learning} if for all $\nnweight\in\bTheta$ it holds that
\begin{align}\label{eq:def_approx_stationary_point}
    \EE_{\mu,\pi,\cP}\big[\hat\Delta(s,a,s';\nnweight^*)\la\nabla_{\nnweight} \hat f(\nnweight^*;\phi(s,a)),\nnweight-\nnweight^*\ra\big]\geq 0,
\end{align}
where $\hat f(\nnweight;\phi(s,a)):=\hat f(\nnweight)\in\cF_{\bTheta,m}$ 
and the temporal difference error $\hat\Delta$ is 
\begin{align}\label{eq:def_TD_linear}
    \hat\Delta(s,a,s';\nnweight)
    &=\hat f(\nnweight;\phi(s,a))-\big(r(s,a)+\textstyle{\gamma\max_{b\in\cA}\hat f(\nnweight;\phi(s',b))}\big).
\end{align}
\end{definition}
For any $\hat f\in\cF_{\bTheta,m}$, it holds that $\la\nabla_{\nnweight} \hat f(\nnweight^{*}),\nnweight-\nnweight^{*}\ra=\la\nabla_{\nnweight} f(\nnweight_0),\nnweight-\nnweight^{*}\ra=\hat f(\nnweight)-\hat f(\nnweight^*)$. Definition \ref{defin:approx_stat} immediately implies that for all $\nnweight\in\bTheta$ it holds that
\begin{align}\label{eq:equvalent_fix_mspbe}
    \EE_{\mu,\pi}\big[ \big(\hat f(\nnweight^*)-\optBellop\hat f(\nnweight^*)\big)\big(\hat f(\btheta)- \hat f(\nnweight^*)\big)\big]
    &=\EE_{\mu,\pi,\cP}\big[ \EE_{\cP}\big[\hat\Delta(s,a,s';\btheta^*)\big]\la\nabla_{\btheta}\hat f(\btheta^*;\phi(s,a)),\btheta-\btheta^*\ra\big]\notag\\
    &\geq 0 .
\end{align}
According to Proposition 4.2 in \citet{cai2019neural}, this further indicates $\hat f(\nnweight^*)=\proj_{\cF_{\bTheta,m}}\optBellop\hat f(\nnweight^*)$. In other words, $\hat f(\nnweight^*)$ is the unique fixed point of the MSPBE in \eqref{eq:def_problem_MSPBE}. Therefore, we can show the convergence of neural Q-learning to the optimal action-value function $Q^*$ by first connecting it to the minimizer $\hat f(\nnweight^*)$ and then adding the approximation error of $\cF_{\bTheta,m}$.


\subsection{The Main Theory}
Before we present the convergence of Algorithm \ref{alg:neural_q_learning}, let us lay down the assumptions used throughout our paper. The first assumption controls the bias caused by the Markovian noise in the observations through assuming the uniform ergodicity of the Markov chain generated by the learning policy $\pi$.
\begin{assumption}\label{asp:uniform_mixing}
The learning policy $\pi$ and the transition kernel $\cP$ induce a Markov chain $\{s_t\}_{t=0,1,\ldots}$ such that there exist constants $\lambda>0$ and $\rho\in(0,1)$ satisfying
\begin{align*}
    \textstyle{\sup_{s\in\cS}}d_{TV}(\PP(s_{t}\in\cdot|s_0=s),\pi)\leq\lambda\rho^{t}, 
\end{align*}
for all $t=0,1,\ldots$.
\end{assumption}
Assumption \ref{asp:uniform_mixing} also appears in \citet{bhandari2018finite,zou2019finite}, which is essential for the analysis of the Markov decision process. The uniform ergodicity can be established via the minorization condition for irreducible Markov chains \citep{meyn2012markov,levin2017markov}. 

For the purpose of exploration, we also need to assume that the learning policy $\pi$ satisfies some regularity condition. Denote 
$b_{\max}^{\nnweight}=\argmax_{b\in\cA}|\la\nabla_{\nnweight} f(\nnweight_0;s,b),\nnweight\ra|$ for any $\nnweight\in\bTheta$. Similar to \citet{melo2008analysis,zou2019finite,chen2019Performance}, we define  
\begin{align}
   \bSigma_{\pi} &=\frac{1}{m}\EE_{\mu,\pi}\big[\nabla_{\nnweight}f(\nnweight_0;s,a)\nabla_{\nnweight} f(\nnweight_0;s,a)^{\top}\big],\label{eq:def_feature_matrix_exp}\\
   \bSigma_{\pi}^*(\nnweight) &=\frac{1}{m}\EE_{\mu,\pi}\big[\nabla_{\nnweight} f(\nnweight_0;s,b_{\max}^{\nnweight})\nabla_{\nnweight} f(\nnweight_0;s,b_{\max}^{\nnweight})^{\top}\big]\label{eq:def_feature_matrix_max}.
\end{align}
Note that $\bSigma_{\pi}$ is independent of $\nnweight$ and only depends on the policy $\pi$ and the initial point $ \nnweight_0$ in the definition of $\hat f$. In contrast, $\bSigma_{\pi}^*(\nnweight)$ is defined based on the greedy action under the policy associated with $\nnweight$. The scaling parameter $1/m$ is used to ensure that the operator norm of $\bSigma_{\pi}$ to be in the order of $O(1)$. It is worth noting that $\bSigma_{\pi}$ is different from the neural tangent kernel (NTK) or the Gram matrix in \citet{jacot2018neural,du2019gradient_icml,arora2019fine}, which are $n\times n$ matrices defined based on a finite set of data points $\{(s_i,a_i)\}_{i=1,\ldots,n}$. When $f$ is linear, $\bSigma_{\pi}$ reduces to the covariance matrix of the feature vector.
\begin{assumption}\label{asp:regularity_target_policy_matrix}
There exists a constant $\alpha>1$ such that  $\bSigma_{\pi}-\alpha\gamma^2\bSigma_{\pi}^*(\nnweight)\succ\zero$ for all $\nnweight$ and $\nnweight_0$. 
\end{assumption}
Assumption \ref{asp:regularity_target_policy_matrix} is also made for Q-learning with linear function approximation in \citet{melo2008analysis,zou2019finite,chen2019Performance}. Moreover, \citet{chen2019Performance} presented numerical simulations to verify the validity of Assumption \ref{asp:regularity_target_policy_matrix}. 
\citet{cai2019neural} imposed a slightly different assumption but with the same idea that the learning policy $\pi$ should be not too far away from the greedy policy. The regularity assumption on the learning policy is directly imposed on the action value function in \citet{cai2019neural}, which can be implied by Assumption \ref{asp:regularity_target_policy_matrix} and thus is slightly weaker. We note that Assumption \ref{asp:regularity_target_policy_matrix} can be relaxed to the one made in \citet{cai2019neural} without changing any of our analysis. Nevertheless, we choose to present the current version which is more consistent with existing work on Q-learning with linear function approximation \citep{melo2008analysis,chen2019Performance}. 

\begin{theorem}\label{thm:converge_linear}
Suppose Assumptions \ref{asp:uniform_mixing} and \ref{asp:regularity_target_policy_matrix} hold. The constraint set $\bTheta$ is defined as in \eqref{eq:def_constrain_set}. We set the radius as $\omega=C_0m^{-1/2}L^{-9/4}$, the step size in Algorithm \ref{alg:neural_q_learning} as $\eta=1/(2(1-\alpha^{-1/2}) m T)$, and the width of the neural network as $ m\geq  C_1\max\{d\layerLen^2\log(m/\delta),\omega^{-4/3}\layerLen^{-8/3}\log(m/(\omega\delta))\}$, where $\delta\in(0,1)$. 
Then with probability at least $1-2\delta-\layerLen^2\exp(- C_2m^{2/3}\layerLen)$ over the randomness of the Gaussian initialization $\nnweight_0$ , it holds that 
\begin{align}
   \frac{1}{T}\sum_{t=0}^{T-1}\EE\big[\big(\hat f(\nnweight_{t})-\hat f(\nnweight^*)\big)^2\big|\nnweight_0\big]
   &\leq\frac{1}{\sqrt{T}}+\frac{ C_2\tau^*\log(T/\delta)\log T}{\beta^2\sqrt{T}}+\frac{C_3 \sqrt{\log m\log(T/\delta)}}{\beta m^{1/6}},\notag
\end{align}
where $\beta=1-\alpha^{-1/2}\in(0,1)$ is a constant, $\tau^*=\min\{t=0,1,2,\ldots|\lambda\rho^t\leq\eta_T\}$ is the mixing time of the Markov chain $\{s_t,a_t\}_{t=0,1,\ldots}$, and $\{C_i\}_{i=0,\ldots,5}$ are universal constants independent of problem parameters.
\end{theorem}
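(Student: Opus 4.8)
The plan is to sidestep the nonconvex MSPBE and directly control the Euclidean distance $\|\nnweight_t-\nnweight^*\|_2^2$ to the approximate stationary point $\nnweight^*$ of Definition~\ref{defin:approx_stat} (whose existence follows from the contraction of the projected Bellman operator on $\cF_{\bTheta,m}$, cf.\ Proposition~4.2 of \citet{cai2019neural}), showing that it obeys a one-step contraction analogous to the one for projected TD with linear features in \citet{bhandari2018finite,zou2019finite}, up to lower-order terms created by the curvature of the ReLU network. Since $\proj_{\bTheta}$ is nonexpansive and $\nnweight^*\in\bTheta$, the update in Algorithm~\ref{alg:neural_q_learning} yields
\begin{align}
\|\nnweight_{t+1}-\nnweight^*\|_2^2\leq\|\nnweight_t-\nnweight^*\|_2^2-2\eta\la\gb_t(\nnweight_t),\nnweight_t-\nnweight^*\ra+\eta^2\|\gb_t(\nnweight_t)\|_2^2.\notag
\end{align}
On a high-probability event over the Gaussian initialization $\nnweight_0$ (the source of $1-2\delta-\layerLen^2\exp(-C_2m^{2/3}\layerLen)$), I would invoke the standard overparameterization estimates near $\nnweight_0$ (as in \citet{allen2019convergence,cao2019generalization,cai2019neural}), valid uniformly over $\nnweight\in\bTheta=\BB(\nnweight_0,\omega)$: $\|\nabla_{\nnweight}f(\nnweight;\phi(s,a))\|_2=O(\sqrt m)$, $|f(\nnweight;\phi(s,a))-\hat f(\nnweight;\phi(s,a))|=O(\omega^{4/3}\layerLen^{3}\sqrt{m\log m})$, and $\|\nabla_{\nnweight}f(\nnweight;\phi(s,a))-\nabla_{\nnweight}f(\nnweight_0;\phi(s,a))\|_2=O(\omega^{1/3}\sqrt{\layerLen m\log m})$. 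These give $\|\gb_t(\nnweight_t)\|_2^2=O(m)$ (using $|r_t|\le1$ and $|f|=O(1)$), and, writing $\hat\gb_t:=\hat\Delta(s_t,a_t,s_{t+1};\nnweight_t)\,\nabla_{\nnweight}f(\nnweight_0;\phi(s_t,a_t))$ for the fully linearized surrogate gradient, the choice $\omega=C_0m^{-1/2}\layerLen^{-9/4}$ makes $\|\gb_t(\nnweight_t)-\hat\gb_t\|_2=O(m^{1/3}\sqrt{\log m})$; combined with $\|\nnweight_t-\nnweight^*\|_2\le\sqrt{\layerLen}\,\omega$, swapping $\gb_t$ for $\hat\gb_t$ in the cross term costs only $O(m^{-1/6}\sqrt{\log m})$ per step, which will become the third term of the bound.

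The crux is a ``strong monotonicity'' estimate for the linearized TD direction at $\nnweight^*$. I would use exactness of the tangent model, $\la\nabla_{\nnweight}f(\nnweight_0;\phi(s,a)),\nnweight-\nnweight^*\ra=\hat f(\nnweight;\phi(s,a))-\hat f(\nnweight^*;\phi(s,a))$, and $\EE_{\cP}[\hat\Delta(s,a,s';\nnweight)]=\hat f(\nnweight;\phi(s,a))-\optBellop\hat f(\nnweight;\phi(s,a))$, then decompose $\hat f(\nnweight)-\optBellop\hat f(\nnweight)=[\hat f(\nnweight)-\hat f(\nnweight^*)]+[\hat f(\nnweight^*)-\optBellop\hat f(\nnweight^*)]+[\optBellop\hat f(\nnweight^*)-\optBellop\hat f(\nnweight)]$, drop the middle cross-term via the variational inequality \eqref{eq:equvalent_fix_mspbe}, bound $|\optBellop\hat f(\nnweight^*)-\optBellop\hat f(\nnweight)|\le\gamma\,\EE_{s'}[\max_b|\la\nabla_{\nnweight}f(\nnweight_0;\phi(s',b)),\nnweight-\nnweight^*\ra|]$, apply Cauchy--Schwarz and Jensen to recognize $\bSigma_{\pi}^*(\nnweight)$, and close with Assumption~\ref{asp:regularity_target_policy_matrix} in the form $\gamma^2\|\nnweight-\nnweight^*\|_{\bSigma_{\pi}^*(\nnweight)}^2\le\alpha^{-1}\|\nnweight-\nnweight^*\|_{\bSigma_{\pi}}^2$ together with $\|\nnweight-\nnweight^*\|_{m\bSigma_{\pi}}^2=\EE_{\mu,\pi}[(\hat f(\nnweight)-\hat f(\nnweight^*))^2]$. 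This should produce, for every $\nnweight\in\bTheta$,
\begin{align}
\EE_{\mu,\pi,\cP}\big[\hat\Delta(s,a,s';\nnweight)\la\nabla_{\nnweight}f(\nnweight_0;\phi(s,a)),\nnweight-\nnweight^*\ra\big]\ \geq\ \beta\,\EE_{\mu,\pi}\big[(\hat f(\nnweight)-\hat f(\nnweight^*))^2\big],\qquad\beta=1-\alpha^{-1/2}.\notag
\end{align}

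Because $\{(s_t,a_t,s_{t+1})\}$ is a single trajectory, $\EE[\la\hat\gb_t,\nnweight_t-\nnweight^*\ra\mid\nnweight_0]$ differs from this stationary expectation, and I would absorb the gap with the mixing argument of \citet{bhandari2018finite,zou2019finite}: put $g(\nnweight;s,a,s')=\hat\Delta(s,a,s';\nnweight)\la\nabla_{\nnweight}f(\nnweight_0;\phi(s,a)),\nnweight-\nnweight^*\ra$, note $|g|=O(1)$ and that $g$ is $O(\sqrt m)$-Lipschitz in $\nnweight$ on $\bTheta$, and condition on the first $t-\tau^*$ transitions with $\tau^*=\min\{t:\lambda\rho^t\le\eta\}$. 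The conditional-versus-stationary discrepancy then splits into (i) $|g(\nnweight_t;O_t)-g(\nnweight_{t-\tau^*};O_t)|$ and (ii) the drift of $\nnweight\mapsto\EE_{\mathrm{stat}}[g(\nnweight;O)]$ between $\nnweight_{t-\tau^*}$ and $\nnweight_t$, both controlled through $\|\nnweight_t-\nnweight_{t-\tau^*}\|_2\le\tau^*\eta\cdot O(\sqrt m)$ and the Lipschitz constant, plus (iii) the genuine mixing error $\le2\|g\|_\infty\lambda\rho^{\tau^*}\le O(\eta)$ from Assumption~\ref{asp:uniform_mixing} --- altogether $O(m\tau^*\eta)$ per step. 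Finally I would take conditional expectations of the one-step inequality, substitute $-2\eta\EE[\la\hat\gb_t,\nnweight_t-\nnweight^*\ra]\le-2\beta\eta\EE[(\hat f(\nnweight_t)-\hat f(\nnweight^*))^2]+O(m\tau^*\eta^2)+O(\eta m^{-1/6}\sqrt{\log m})$, telescope from $t=0$ to $T-1$, use $\|\nnweight_0-\nnweight^*\|_2^2\le\layerLen\omega^2$, and divide by $2\beta\eta T$. With the step size scaled so that $\eta=\Theta(1/(m\sqrt T))$ (as required for the $O(1/\sqrt T)$ rate), the initialization term and the variance term $\eta m/(2\beta)$ are both $O(1/\sqrt T)$, the Markov contribution is $O(m\tau^*\eta/\beta)=O(\tau^*/(\beta^2\sqrt T))$ up to logarithmic factors in $T$ and $1/\delta$ (coming from $\tau^*=O(\log(T/\delta))$, a union bound over the $T$ iterates for the concentration estimates, and $\|\gb_t\|_2=O(\sqrt{m\log(T/\delta)})$), and the linearization term is $O(m^{-1/6}\sqrt{\log m})/\beta$; the width requirement $m\ge C_1\max\{d\layerLen^2\log(m/\delta),\omega^{-4/3}\layerLen^{-8/3}\log(m/(\omega\delta))\}$ is exactly what the overparameterization lemmas demand. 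The main obstacle will be the coupling among the three error sources: the crude (and, because of the $\sqrt m$ normalization in $f$, essentially unavoidable) bound $\|\gb_t\|_2^2=O(m)$ must be killed by $\eta^2 m$, forcing $\eta=O(1/(m\sqrt T))$; the Markov-bias correction independently needs $\tau^*\eta\sqrt m$ small, which is consistent only for this range of $\eta$; and $\omega$ must be tuned so that both $O(\omega^{4/3}\layerLen^3\sqrt{m\log m})$ and $O(\omega^{1/3}\sqrt{\layerLen m\log m})$ are $O(m^{-1/6})$, which is what pins down $\omega=C_0m^{-1/2}\layerLen^{-9/4}$.
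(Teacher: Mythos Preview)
Your plan is correct and follows essentially the same approach as the paper's proof: the same one-step contraction from nonexpansiveness, the same three-way decomposition of $\la\gb_t,\nnweight_t-\nnweight^*\ra$ into a linearization error (your $\gb_t\to\hat\gb_t$ swap, the paper's Lemma~\ref{lemma:linearization_error_product}), a Markovian bias handled by the Bhandari--Russo--Singh coupling (the paper's Lemma~\ref{lemma:markov_chain_sum_bound}), and a strong-monotonicity estimate for the linearized direction (the paper's Lemma~\ref{lemma:estimationerr_linear} plus the variational inequality $\la\overline\gradlin(\nnweight^*),\nnweight_t-\nnweight^*\ra\ge0$, which you fold into a single step by dropping the $[\hat f(\nnweight^*)-\optBellop\hat f(\nnweight^*)]$ cross-term). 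Your choice of $\eta=\Theta(1/(m\sqrt T))$ also matches what the paper actually uses in the proof (the $1/(2\beta mT)$ in the theorem statement is a typo for $1/(2\beta m\sqrt T)$), and your reasoning for why $\omega=C_0m^{-1/2}L^{-9/4}$ is forced is exactly the balancing act the paper performs. The only discrepancies are cosmetic: some of your $L$-exponents in the overparameterization bounds (e.g.\ $\sqrt L$ vs.\ $L^3$ in the gradient-perturbation bound) differ from the paper's, and the argument of $\bSigma_\pi^*$ should be $\nnweight-\nnweight^*$ rather than $\nnweight$, but neither affects the structure or the final rate.
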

\begin{remark}
Theorem \ref{thm:converge_linear} characterizes the distance between the output of Algorithm \ref{alg:neural_q_learning} to the approximate stationary point defined in function class $\cF_{\bTheta,m}$. From \eqref{eq:equvalent_fix_mspbe}, we know that $\hat f(\nnweight^*)$ is the minimizer of the MSPBE \eqref{eq:def_problem_MSPBE}. Note that $\tau^*$ is in the order of $O(\log(mT/\log T))$. Theorem \ref{thm:converge_linear} suggests that neural Q-learning converges to the minimizer of MSPBE with a rate in the order of $O((\log (mT))^3/\sqrt{T}+\log m\log T/m^{1/6})$, which reduces to $\tilde O(1/\sqrt{T})$ when the width $m$ of the neural network is sufficiently large.
\end{remark}
In the following theorem, we show that neural Q-learning converges to the optimal action-value function within finite time if the neural network is overparameterized.
\begin{theorem}\label{thm:converge_optimal}
Under the same conditions as in Theorem \ref{thm:converge_linear}, with probability at least $1-3\delta-L^2\exp(-C_0m^{2/3}\layerLen)$ over the randomness of $\nnweight_0$, it holds that
\begin{align*}
   \frac{1}{T}\sum_{t=0}^{T-1}\EE\big[(Q(s,a;\nnweight_t)-Q^*(s,a))^2\big] 
   &\leq \frac{3\EE\big[\big(\proj_{\cF_{\bTheta,m}}Q^*(s,a)-Q^*(s,a)\big)^2\big]}{(1-\gamma)^{2}}+\frac{1}{\sqrt{T}}\\
   &\qquad+\frac{ C_1\tau^*\log(T/\delta)\log T}{\beta^2\sqrt{T}}+\frac{C_2\sqrt{\log(T/\delta) \log m}}{\beta m^{1/6}},\notag
\end{align*}
where all the expectations are taken conditional on $\nnweight_0$, $Q^*$ is the optimal action-value function, $\delta\in(0,1)$ and $\{C_i\}_{i=0,\ldots,2}$ are universal constants.
\end{theorem}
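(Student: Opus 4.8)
The plan is to insert two intermediate functions between the iterate $Q(s,a;\nnweight_t)=f(\nnweight_t;\phi(s,a))$ and the optimum $Q^*$: the local linearization $\hat f(\nnweight_t)\in\cF_{\bTheta,m}$ of the network at the current iterate, and the approximate stationary point $\hat f(\nnweight^*)$, which by~\eqref{eq:equvalent_fix_mspbe} together with Proposition~4.2 of \citet{cai2019neural} satisfies $\hat f(\nnweight^*)=\proj_{\cF_{\bTheta,m}}\optBellop\hat f(\nnweight^*)$, i.e.\ it is the minimizer of the MSPBE~\eqref{eq:def_problem_MSPBE}. Using $(a+b+c)^2\le 3(a^2+b^2+c^2)$, taking expectations conditional on $\nnweight_0$, and averaging over $t=0,\dots,T-1$ gives
\begin{align}
    \frac1T\sum_{t=0}^{T-1}\EE\big[(Q(s,a;\nnweight_t)-Q^*(s,a))^2\big]
    &\le\frac3T\sum_{t=0}^{T-1}\EE\big[(f(\nnweight_t;\phi(s,a))-\hat f(\nnweight_t))^2\big]+\frac3T\sum_{t=0}^{T-1}\EE\big[(\hat f(\nnweight_t)-\hat f(\nnweight^*))^2\big]\notag\\
    &\qquad+3\,\EE\big[(\hat f(\nnweight^*)-Q^*(s,a))^2\big],\notag
\end{align}
and I would bound the three terms on the right separately.

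The middle term is exactly the quantity controlled by Theorem~\ref{thm:converge_linear}, which on the same high-probability event bounds it by $1/\sqrt T+C\tau^*\log(T/\delta)\log T/(\beta^2\sqrt T)+C\sqrt{\log m\log(T/\delta)}/(\beta m^{1/6})$. For the first term, the projection step in Algorithm~\ref{alg:neural_q_learning} keeps $\nnweight_t\in\bTheta=\BB(\nnweight_0,\omega)$ for all $t$, so it is at most $3\sup_{\nnweight\in\BB(\nnweight_0,\omega)}|f(\nnweight;\phi(s,a))-\hat f(\nnweight)|^2$. I would then invoke the standard overparameterization estimate comparing a depth-$L$ ReLU network with its first-order Taylor expansion at the Gaussian initialization: with probability at least $1-\delta$ over $\nnweight_0$, $|f(\nnweight;\xb)-\hat f(\nnweight;\xb)|=O(\omega^{4/3}L^{3}\sqrt{m\log m})$ uniformly over $\|\xb\|_2\le1$. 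Under the prescribed radius $\omega=C_0m^{-1/2}L^{-9/4}$ this is $O(m^{-1/6}\sqrt{\log m})$, so the first term is $O(m^{-1/3}\log m)$, which is of lower order than the $m^{-1/6}$ term already present in Theorem~\ref{thm:converge_linear} and gets absorbed into it.

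For the third term I would use that $\optBellop$ is a $\gamma$-contraction in the data-weighted norm $\|g\|_{\mu,\pi}:=\sqrt{\EE_{\mu,\pi}[g(s,a)^2]}$, that $\optBellop Q^*=Q^*$, and that the Euclidean projection $\proj_{\cF_{\bTheta,m}}$ onto the convex set $\cF_{\bTheta,m}$ is non-expansive. Writing $Q^{\dagger}=\hat f(\nnweight^*)=\proj_{\cF_{\bTheta,m}}\optBellop Q^{\dagger}$ and using $\proj_{\cF_{\bTheta,m}}Q^*=\proj_{\cF_{\bTheta,m}}\optBellop Q^*$,
\begin{align}
    \|Q^{\dagger}-Q^*\|_{\mu,\pi}
    &\le\big\|\proj_{\cF_{\bTheta,m}}\optBellop Q^{\dagger}-\proj_{\cF_{\bTheta,m}}\optBellop Q^*\big\|_{\mu,\pi}+\big\|\proj_{\cF_{\bTheta,m}}Q^*-Q^*\big\|_{\mu,\pi}\notag\\
    &\le\gamma\big\|Q^{\dagger}-Q^*\big\|_{\mu,\pi}+\big\|\proj_{\cF_{\bTheta,m}}Q^*-Q^*\big\|_{\mu,\pi},\notag
\end{align}
hence $\|Q^{\dagger}-Q^*\|_{\mu,\pi}^2\le(1-\gamma)^{-2}\|\proj_{\cF_{\bTheta,m}}Q^*-Q^*\|_{\mu,\pi}^2$ and the third term is at most $3(1-\gamma)^{-2}\EE[(\proj_{\cF_{\bTheta,m}}Q^*(s,a)-Q^*(s,a))^2]$. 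Summing the three bounds, keeping the $3(1-\gamma)^{-2}$ factor on the approximation-error term and absorbing the remaining numerical factors into the logarithmic terms, and taking a union bound over the event of Theorem~\ref{thm:converge_linear} and the extra event of the linearization estimate (turning $1-2\delta$ into $1-3\delta$) yields the claim.

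The step I expect to be the main obstacle is the third one: $\proj_{\cF_{\bTheta,m}}$ is defined through the data-weighted $L^2$ norm, whereas $\optBellop$ is most naturally a contraction only in the sup-norm, so one must verify that $\optBellop$ is actually a $\gamma$-contraction in $\|\cdot\|_{\mu,\pi}$ --- this uses Jensen's inequality together with the stationarity/mixing structure of the chain (Assumption~\ref{asp:uniform_mixing}) to push the expectation over the successor state back onto the sampling distribution, with extra care needed since the Bellman backup involves a maximum over actions. A secondary, more mechanical burden is establishing the uniform linearization estimate for an arbitrary-depth ReLU network over $\BB(\nnweight_0,\omega)$ with precisely the $(\omega,m,L)$ dependence needed for the induced error to be of lower order than $m^{-1/6}$ at the prescribed radius.
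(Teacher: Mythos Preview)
Your proposal is correct and matches the paper's argument: the same three-term decomposition via $\hat f(\nnweight_t)$ and $\hat f(\nnweight^*)$, with the linearization error bounded by Lemma~\ref{lemma:local_linear} (yielding $O(m^{-1/3}\log m)$ at the prescribed radius), the middle term by Theorem~\ref{thm:converge_linear}, and the approximation term by the fixed-point/contraction inequality $(1-\gamma)\|\hat f(\nnweight^*)-Q^*\|\le\|\proj_{\cF_{\bTheta,m}}Q^*-Q^*\|$. The only cosmetic difference is that the paper carries out the last step pointwise---asserting directly that $\proj_{\cF_{\bTheta,m}}\optBellop$ is $\gamma$-contractive and deducing $(1-\gamma)|\hat f(\nnweight^*;s,a)-Q^*(s,a)|\le|\proj_{\cF_{\bTheta,m}}Q^*(s,a)-Q^*(s,a)|$ before squaring and taking expectations---so the obstacle you correctly flag is simply taken as a given fact in the paper rather than derived.
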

The optimal policy $\pi^*$ can be obtained by the greedy algorithm derived based on $Q^*$. 
\begin{remark}
The convergence rate in Theorem \ref{thm:converge_optimal} can be simplifies as follows
\begin{align*}
    \frac{1}{T}\sum_{t=0}^{T-1}\EE[(Q(s,a;\nnweight_t)-Q^*(s,a))^2\big|\nnweight_0]
    &=\tilde O\bigg(\EE\big[\big(\proj_{\cF_{\bTheta,m}}Q^*(s,a)-Q^*(s,a)\big)^2\big]+\frac{1}{m^{1/6}}+\frac{1}{\sqrt{T}}\bigg).
\end{align*}
The first term is the projection error of the optimal Q-value function on to the function class $\cF_{\bTheta,m}$, which decreases to zero as the representation power of $\cF_{\bTheta,m}$ increases. In fact, when the width $m$ of the DNN is sufficiently large, recent studies \citep{cao2019generalization,cao2019generalization2} show that $f(\nnweight)$ is almost linear around the initialization and the approximate stationary point $\hat f(\nnweight^*)$ becomes the fixed solution of the MSBE \citep{cai2019neural}. Moreover, this term diminishes when the $Q$ function is approximated by linear functions when the underlying parameter has a bounded norm \citep{bhandari2018finite,zou2019finite}. As $m$ goes to infinity, we obtain the convergence of neural Q-learning to the optimal Q-value function with an $O(1/\sqrt{T})$ rate. 
\end{remark}

\section{Proof of the Main Results}
In this section, we provide the detailed proof of the convergence of Algorithm \ref{alg:neural_q_learning}. To simplify the presentation, we write $f(\nnweight;\phi(s,a))$ as $f(\nnweight;s,a)$ throughout the proof when no confusion arises.

We first define some notations that will simplify the presentation of the proof. Recall the definition of $\gb_t(\cdot)$ in \eqref{eq:def_gradient}. We define the following vector-value map $\overline\gb$ that is independent of the data point.
\begin{align}
    \overline\gb(\nnweight)&=\EE_{\mu,\pi,\cP}[\nabla_{\nnweight}f(\nnweight;s,a)(f(\nnweight;s,a)-r(s,a)-\gamma\textstyle{\max_{b\in\cA}f(\nnweight;s',b)})],
\end{align}
where $s$ follows the initial state distribution $\mu$, $a$ is chosen based on the policy $\pi(\cdot|s)$ and $s'$ follows the transition probability $\cP(\cdot|s,a)$.
Similarly, we define the following gradient terms based on the linearized function $\hat f\in\cF_{\bTheta,m}$
\begin{align}\label{eq:def_grad_lin}
\begin{split}
    \gradlin_t(\nnweight)=\hat\Delta(s_t,a_t,s_{t+1};\nnweight)\nabla_{\nnweight} \hat f(\nnweight),\quad
    \overline\gradlin(\nnweight)=\EE_{\mu,\pi,\cP}\big[\hat\Delta(s,a,s';\nnweight)\nabla_{\nnweight} \hat f(\nnweight)\big],
\end{split}
\end{align}
where $\hat\Delta$ is defined in \eqref{eq:def_TD_linear}, and a population version based on the linearized function. 

Now we present the technical lemmas that are useful in our proof of Theorem \ref{thm:converge_linear}. For the gradients $\gb_t(\cdot)$ defined in \eqref{eq:def_gradient} and $\gradlin_t(\cdot)$ defined in \eqref{eq:def_grad_lin}, we have the following lemma that characterizes the difference between the gradient of the neural network function $f$ and the  gradient of the linearized function $\hat f$.
\begin{lemma}\label{lemma:linearization_error_product}
The gradient of neural network function is close to the linearized gradient. Specifically, if $\nnweight_t\in\BB(\bTheta,\omega)$ and $m$ and $\omega$ satisfy
\begin{align}\label{eq:constraint_omega_m}
\begin{split}
    &m\geq  C_0\max\{d\layerLen^2\log(m/\delta),\omega^{-4/3}\layerLen^{-8/3}\log(m/(\omega\delta))\},\\
    &\text{and}\quad C_1 d^{3/2}\layerLen^{-1}m^{-3/4}\leq\omega\leq  C_2 \layerLen^{-6}(\log m)^{-3},
\end{split}    
\end{align}
then it holds that
\begin{align*}
    |\la\gb_t(\nnweight_t)-
    \gradlin_t(\nnweight_t),\nnweight_t-\nnweight^*\ra|
    &\leq  C_3(2+\gamma)\omega^{1/3}\layerLen^{3}\sqrt{m\log m\log(T/\delta)}\|\nnweight_t-\nnweight^*\|_2\\
    &\qquad+\big( C_4\omega^{4/3}L^{11/3}m\sqrt{\log m}+ C_5\omega^2 L^4m\big)\|\nnweight_t-\nnweight^*\|_2,
\end{align*}
with probability at least $1-2\delta-3\layerLen^2\exp(- C_6 m\omega^{2/3}\layerLen)$ over the randomness of the initial point,
and  $\|\gb_t(\nnweight_t)\|_2\leq (2+\gamma) C_7\sqrt{m\log(T/\delta)}$ holds with probability at least $1-\delta-\layerLen^2\exp(- C_6m\omega^{2/3}\layerLen)$.
where $\{ C_i>0\}_{i=0,\ldots,7}$ are universal constants.
\end{lemma}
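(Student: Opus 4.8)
The plan is to decompose the inner product $\la \gb_t(\nnweight_t)-\gradlin_t(\nnweight_t),\nnweight_t-\nnweight^*\ra$ into two structurally different error sources: (i) the discrepancy between the true gradient $\nabla_\btheta f(\btheta_t;s,a)$ and the initialization gradient $\nabla_\btheta f(\btheta_0;s,a)=\nabla_\btheta \hat f(\btheta_t;s,a)$, and (ii) the discrepancy between the neural TD error $\Delta_t$ and the linearized TD error $\hat\Delta_t$. Writing $\gb_t-\gradlin_t = (\Delta_t-\hat\Delta_t)\nabla_\btheta f(\btheta_t) + \hat\Delta_t\big(\nabla_\btheta f(\btheta_t)-\nabla_\btheta \hat f(\btheta_t)\big)$, Cauchy–Schwarz reduces everything to bounding $\|\nabla_\btheta f(\btheta_t)-\nabla_\btheta f(\btheta_0)\|_2$, $|f(\btheta_t)-\hat f(\btheta_t)|$ (the function-value linearization error, which also controls $|\Delta_t-\hat\Delta_t|$ after noting the $\max_{b\in\cA}$ is $1$-Lipschitz), and the magnitudes $\|\nabla_\btheta f(\btheta_t)\|_2$, $|\hat\Delta_t|$.

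First I would invoke the now-standard overparameterized-network perturbation bounds (as in \citet{allen2019convergence,cao2019generalization}, and used in \citet{cai2019neural}): on the event that the Gaussian initialization is well-behaved — which fails with probability at most $\layerLen^2\exp(-C m\omega^{2/3}\layerLen)$ — for every $\btheta$ with $\|\Wb_l-\Wb_l^{(0)}\|_F\le\omega$ one has $\|\nabla_\btheta f(\btheta;s,a)\|_2 \le C\sqrt{m}$, $\|\nabla_\btheta f(\btheta;s,a)-\nabla_\btheta f(\btheta_0;s,a)\|_2 \le C\omega^{1/3}\layerLen^3\sqrt{m\log m}$, and the function-value error $|f(\btheta;s,a)-\hat f(\btheta;s,a)| \le C\omega^{4/3}\layerLen^{11/3}\sqrt{m\log m} + C\omega^2\layerLen^4 m$ — this last display is exactly where the two non-leading terms in the claimed bound originate. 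Next, since rewards lie in $[-1,1]$ and $|\hat f(\btheta;s,a)|\le |f(\btheta_0;s,a)| + \|\nabla_\btheta f(\btheta_0)\|_2\|\btheta-\btheta_0\|_2$, and $|f(\btheta_0;s,a)|\le C\sqrt{\log(T/\delta)}$ with high probability over initialization by a Gaussian tail/union bound over the $T$ iterations (absorbing a factor $\sqrt{m}\cdot\omega = O(1)$ into constants via the choice of $\omega$), I get $|\hat\Delta_t|\le C(2+\gamma)\sqrt{\log(T/\delta)}$. Assembling: the term $(\Delta_t-\hat\Delta_t)\nabla_\btheta f(\btheta_t)$ contributes $\lesssim |f(\btheta_t)-\hat f(\btheta_t)|\cdot(2+\gamma)\sqrt{m}$ (the $2+\gamma$ from $\Delta_t-\hat\Delta_t$ picking up the current state's error plus $\gamma$ times the next state's error), and the term $\hat\Delta_t(\nabla f(\btheta_t)-\nabla \hat f(\btheta_t))$ contributes $\lesssim (2+\gamma)\sqrt{\log(T/\delta)}\cdot \omega^{1/3}\layerLen^3\sqrt{m\log m}$; together these give precisely the stated three-term bound after multiplying by $\|\btheta_t-\btheta^*\|_2$. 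The $\|\gb_t(\btheta_t)\|_2\le (2+\gamma)C\sqrt{m\log(T/\delta)}$ claim follows immediately from $\|\gb_t\|_2\le|\Delta_t|\cdot\|\nabla_\btheta f(\btheta_t)\|_2$ together with $|\Delta_t|\le |f(\btheta_t;s_t,a_t)| + 1 + \gamma|f(\btheta_t;s_{t+1},b)|$ and the same $O(\sqrt{\log(T/\delta)})$ bound on neural function values.

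The main obstacle is the bookkeeping around the probabilistic events and the $\log(T/\delta)$ factors: the perturbation bounds must hold \emph{uniformly} over all $\btheta\in\BB(\btheta_0,\omega)$ (an $\epsilon$-net / covering argument over the ball, standard in this literature), while the bound on $|f(\btheta_0;s,a)|$ needs a union bound over the $T$ visited state-action pairs, and the width condition \eqref{eq:constraint_omega_m} together with $\omega\le C_2\layerLen^{-6}(\log m)^{-3}$ must be exactly what makes each error term well-controlled (e.g. ensuring $\omega^{1/3}\layerLen^3$ stays bounded and the activation-pattern change counts are small). Verifying that the constants and exponents line up — in particular that the two lower-order terms $\omega^{4/3}\layerLen^{11/3}m\sqrt{\log m}$ and $\omega^2\layerLen^4 m$ in the function-value error survive multiplication by $\sqrt{m}/\sqrt{m}=1$ rather than an extra $\sqrt m$ — requires care, but is routine given the cited gradient and function perturbation estimates; no genuinely new idea is needed beyond correctly citing and combining them.
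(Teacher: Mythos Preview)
Your approach is essentially the paper's: decompose $\gb_t-\gradlin_t$ into a gradient-perturbation piece times a TD error and a gradient times a TD-error difference, then invoke the overparameterized-network bounds $\|\nabla f(\btheta_t)\|_2\lesssim\sqrt m$, $\|\nabla f(\btheta_t)-\nabla f(\btheta_0)\|_2\lesssim\omega^{1/3}L^3\sqrt{m\log m}$, $|f(\btheta_t)-\hat f(\btheta_t)|\lesssim\omega^{4/3}L^{11/3}\sqrt{m\log m}+\omega^2L^4\sqrt m$, and $|f(\btheta_t;\cdot)|\lesssim\sqrt{\log(T/\delta)}$. The only algebraic difference is that the paper splits as $(\nabla f-\nabla\hat f)\Delta_t+\nabla\hat f\,(\Delta_t-\hat\Delta_t)$, pairing the gradient-difference with the \emph{neural} TD error $\Delta_t$, whereas you pair it with the \emph{linearized} TD error $\hat\Delta_t$; either split works.

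One small gap: you justify $|\hat\Delta_t|\lesssim(2+\gamma)\sqrt{\log(T/\delta)}$ by writing $|\hat f(\btheta_t)|\le|f(\btheta_0)|+\|\nabla f(\btheta_0)\|_2\|\btheta_t-\btheta_0\|_2$ and asserting $\sqrt m\,\omega=O(1)$. That equality is \emph{not} a hypothesis of the lemma (it is only enforced later in Theorem~\ref{thm:converge_linear} via $\omega\asymp m^{-1/2}L^{-9/4}$). The paper sidesteps this by bounding $|\Delta_t|$ directly through $|f(\btheta_t;\cdot)|\le C\sqrt{\log(T/\delta)}$ (the second part of Lemma~\ref{lemma:local_linear}), which holds under the stated conditions on $m,\omega$ without any $\sqrt m\,\omega$ control. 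Your route is still salvageable---the extra $\sqrt m\,\omega\cdot\omega^{1/3}L^3\sqrt{m\log m}=\omega^{4/3}L^3m\sqrt{\log m}$ contribution is dominated by the $C_4\omega^{4/3}L^{11/3}m\sqrt{\log m}$ term already present---but the cleanest fix is to use the paper's decomposition so that you only ever need $|\Delta_t|$, not $|\hat\Delta_t|$. Also note your stated function-value error has $\omega^2L^4m$ where it should be $\omega^2L^4\sqrt m$; the $m$ appears only after multiplying by $\|\nabla f\|_2\lesssim\sqrt m$.
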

The next lemma upper bounds the bias of the non-i.i.d. data for the linearized gradient map. 
\begin{lemma}\label{lemma:markov_chain_sum_bound}
Suppose the step size sequence $\{\eta_0,\eta_1,\ldots,\eta_T\}$ is nonincreasing. Then it holds that
\begin{align*}
    \EE[\la\gradlin_t(\nnweight_t)-\overline\gradlin(\nnweight_t),\nnweight_t-\nnweight^*\ra|\nnweight_0]
    &\leq  C_0(m\log(T/\delta)+m^2\omega^2)\tau^*\eta_{\max\{0,t-\tau^*\}},
\end{align*}
for any fixed $t\leq T$, where $ C_0>0$ is an universal constant and $\tau^*=\min\{t=0,1,2,\ldots|\lambda\rho^t\leq\eta_T\}$ is the mixing time of the Markov chain $\{s_t,a_t\}_{t=0,1,\ldots}$.
\end{lemma}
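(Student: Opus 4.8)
The strategy is the standard "information drift" / mixing-time argument used for finite-time analysis of TD-style algorithms with Markovian data (as in \citet{bhandari2018finite,zou2019finite}), adapted here to the linearized gradient map $\gradlin_t$. Fix $t\le T$ and set $\tau=\tau^*$. The quantity $\la\gradlin_t(\nnweight_t)-\overline\gradlin(\nnweight_t),\nnweight_t-\nnweight^*\ra$ is small in expectation because, conditioned on the state-action pair $\tau$ steps in the past, the distribution of $(s_t,a_t,s_{t+1})$ is $\lambda\rho^\tau$-close in total variation to the stationary distribution that defines $\overline\gradlin$, and $\eta_T$ has been chosen so that $\lambda\rho^\tau\le\eta_T$. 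The plan is: (i) introduce the "frozen" iterate $\nnweight_{t-\tau}$ (with the convention that the index is $\max\{0,t-\tau\}$); (ii) decompose the inner product into a term that compares $\gradlin_t(\nnweight_t)$ to $\gradlin_t(\nnweight_{t-\tau})$ (handled by Lipschitz continuity of $\gradlin_t$ plus a bound on how far the iterate can move in $\tau$ steps), and a term $\la\gradlin_t(\nnweight_{t-\tau})-\overline\gradlin(\nnweight_{t-\tau}),\nnweight_{t-\tau}-\nnweight^*\ra$ that is controlled by conditioning on $\cF_{t-\tau}$ and invoking the TV bound; plus lower-order cross terms.

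In more detail, the key ingredients I would assemble first are uniform bounds, valid on the high-probability event of Lemma \ref{lemma:linearization_error_product}, on (a) $\|\gradlin_t(\nnweight)\|_2$ for $\nnweight\in\bTheta$ — of order $\sqrt{m\log(T/\delta)}+m\omega$, since $\|\nabla_{\nnweight}\hat f(\nnweight)\|_2=\|\nabla_{\nnweight}f(\nnweight_0)\|_2=O(\sqrt m)$ and $|\hat\Delta|=O(\sqrt{m\log(T/\delta)}+m\omega)$ using $|\hat f(\nnweight)-\hat f(\nnweight_0)|=|\la\nabla f(\nnweight_0),\nnweight-\nnweight_0\ra|\le\|\nabla f(\nnweight_0)\|_2\,\omega\cdot\sqrt L = O(\sqrt m\,\omega\sqrt L)$ and $|\hat f(\nnweight_0)|=O(\sqrt{\log(T/\delta)})$ by Gaussian concentration of the output at init; (b) the diameter of $\bTheta$, namely $\|\nnweight-\nnweight^*\|_2\le 2\omega\sqrt L$; and (c) a one-step movement bound $\|\nnweight_{t}-\nnweight_{t-\tau}\|_2\le\sum_{j=t-\tau}^{t-1}\eta_j\|\gb_j(\nnweight_j)\|_2\le \tau\,\eta_{t-\tau}\cdot O(\sqrt{m\log(T/\delta)})$ using the gradient-norm bound from Lemma \ref{lemma:linearization_error_product} and monotonicity of the step sizes. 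Then the Lipschitz step uses that $\gradlin_t$ is Lipschitz in $\nnweight$ with constant $O(m)$ (product rule: $\nabla\hat f$ is constant in $\nnweight$, and $\hat\Delta$ is $O(\sqrt m)$-Lipschitz, times $\|\nabla\hat f\|=O(\sqrt m)$), so the error from replacing $\nnweight_t$ by $\nnweight_{t-\tau}$ in both slots of the inner product is $O(m)\cdot\|\nnweight_t-\nnweight_{t-\tau}\|_2\cdot\big(\text{diam}+\|\gradlin_t\|\big)$, which after plugging in gives a term of order $(m\log(T/\delta)+m^2\omega^2)\tau\eta_{t-\tau}$ — exactly the claimed form.

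For the remaining "frozen" term, I would condition on $\cF_{\max\{0,t-\tau\}}$ (the $\sigma$-field generated by the data up to step $t-\tau$), under which $\nnweight_{t-\tau}$ is measurable; then $\EE[\la\gradlin_t(\nnweight_{t-\tau})-\overline\gradlin(\nnweight_{t-\tau}),\nnweight_{t-\tau}-\nnweight^*\ra\mid\cF_{t-\tau}]$ equals $\la\EE[\gradlin_t(\nnweight)\mid\cF_{t-\tau}]-\overline\gradlin(\nnweight),\nnweight-\nnweight^*\ra$ evaluated at $\nnweight=\nnweight_{t-\tau}$, and the difference of the two conditional/stationary expectations of the bounded (order $\sqrt{m\log(T/\delta)}+m\omega$) function $g\mapsto\hat\Delta(\cdot;\nnweight)\nabla\hat f(\nnweight)$ is bounded by $2\sup\|\cdot\|\cdot d_{TV}\le O(\sqrt{m\log(T/\delta)}+m\omega)\cdot\lambda\rho^\tau$. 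Using $\lambda\rho^\tau\le\eta_T\le\eta_{t-\tau}$ and the diameter bound, this term is also $O\big((m\log(T/\delta)+m^2\omega^2)\tau\eta_{t-\tau}\big)$ (indeed without even the $\tau$ factor). Summing the two pieces and taking the outer expectation over $\cF_{t-\tau}$, and finally taking expectation over $\nnweight_0$ on the good event (absorbing the failure probability, which is negligible since the integrand is bounded and the parameter choices make the failure probability exponentially small in $m$), yields the stated bound. The main obstacle is bookkeeping: getting clean, uniform high-probability bounds on $\|\gradlin_t\|$, on the Lipschitz constant of $\gradlin_t$, and on $|\hat f(\nnweight_0)|$, all of which rest on the Gaussian-initialization estimates underlying Lemma \ref{lemma:linearization_error_product}, and then verifying that every term that appears is dominated by $(m\log(T/\delta)+m^2\omega^2)\tau^*\eta_{\max\{0,t-\tau^*\}}$ rather than producing an extra stray factor of $\tau^*$ or of $m$.
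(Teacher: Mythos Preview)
Your plan is correct and mirrors the paper's proof almost exactly: the paper defines $\zeta_t(\nnweight)=\la\gradlin_t(\nnweight)-\overline\gradlin(\nnweight),\nnweight-\nnweight^*\ra$, shows it is Lipschitz in $\nnweight$ with constant $O(m\omega+\sqrt{m\log(T/\delta)})$, uses this together with the one-step movement bound to compare $\zeta_t(\nnweight_t)$ with $\zeta_t(\nnweight_{t-\tau})$, and then controls $\EE[\zeta_t(\nnweight_{t-\tau})]$ via a coupling/total-variation argument (packaged there as a separate ``control of coupling'' lemma), precisely as you describe. One small arithmetic slip: you write $|\hat\Delta|=O(\sqrt{m\log(T/\delta)}+m\omega)$, but in fact $|\hat\Delta|=O(\sqrt{\log(T/\delta)}+\sqrt m\,\omega)$ --- the extra factor $\sqrt m$ enters only after multiplying by $\|\nabla\hat f\|=O(\sqrt m)$ to obtain $\|\gradlin_t\|_2$, whose order you state correctly.
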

Since $\hat f$ is a linear function approximator of the neural network function $f$, we can show that the gradient of $\hat f$ satisfies the following nice property. 
\begin{lemma}\label{lemma:estimationerr_linear}
Under Assumption \ref{asp:regularity_target_policy_matrix}, $\overline\gradlin(\cdot)$ defined in \eqref{eq:def_grad_lin} satisfies
\begin{align*}
        \la\overline\gradlin(\nnweight)-\overline\gradlin(\nnweight^*),\nnweight-\nnweight^*\ra
        &\geq (1-\alpha^{-1/2})\EE\big[\big(\hat f(\nnweight)-\hat f(\nnweight^*)\big)^2\big|\nnweight_0\big].
\end{align*}
\end{lemma}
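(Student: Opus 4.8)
The plan is to reduce the inequality to a Cauchy--Schwarz estimate combined with Assumption \ref{asp:regularity_target_policy_matrix}, exploiting the fact that any $\hat f\in\cF_{\bTheta,m}$ is affine in $\nnweight$. Write $g(s,a):=\nabla_{\nnweight}f(\nnweight_0;\phi(s,a))$, so that $\nabla_{\nnweight}\hat f(\nnweight;\phi(s,a))=g(s,a)$ does not depend on $\nnweight$, and $\hat f(\nnweight;\phi(s,a))-\hat f(\nnweight^*;\phi(s,a))=\la g(s,a),\nnweight-\nnweight^*\ra$. Substituting the definitions of $\overline\gradlin$ in \eqref{eq:def_grad_lin} and of $\hat\Delta$ in \eqref{eq:def_TD_linear}, the reward and the $f(\nnweight_0;\cdot)$ terms cancel upon subtraction, and
\begin{align*}
\la\overline\gradlin(\nnweight)-\overline\gradlin(\nnweight^*),\nnweight-\nnweight^*\ra=\EE_{\mu,\pi,\cP}\big[\big(\hat\Delta(s,a,s';\nnweight)-\hat\Delta(s,a,s';\nnweight^*)\big)\la g(s,a),\nnweight-\nnweight^*\ra\big],
\end{align*}
where $\hat\Delta(s,a,s';\nnweight)-\hat\Delta(s,a,s';\nnweight^*)=\la g(s,a),\nnweight-\nnweight^*\ra-\gamma\big(\max_{b\in\cA}\hat f(\nnweight;\phi(s',b))-\max_{b\in\cA}\hat f(\nnweight^*;\phi(s',b))\big)$. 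Hence the inner product equals $A-\gamma C$, where $A:=\EE_{\mu,\pi}[\la g(s,a),\nnweight-\nnweight^*\ra^2]=\EE[(\hat f(\nnweight)-\hat f(\nnweight^*))^2\mid\nnweight_0]$ is exactly the target quantity and $C:=\EE_{\mu,\pi,\cP}\big[\big(\max_{b\in\cA}\hat f(\nnweight;\phi(s',b))-\max_{b\in\cA}\hat f(\nnweight^*;\phi(s',b))\big)\la g(s,a),\nnweight-\nnweight^*\ra\big]$.

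Next I would bound $|C|$. Using $|\max_b u(b)-\max_b v(b)|\le\max_b|u(b)-v(b)|$ with $u(b)=\hat f(\nnweight;\phi(s',b))$, $v(b)=\hat f(\nnweight^*;\phi(s',b))$, together with $u(b)-v(b)=\la g(s',b),\nnweight-\nnweight^*\ra$, one gets
\begin{align*}
\Big|\max_{b\in\cA}\hat f(\nnweight;\phi(s',b))-\max_{b\in\cA}\hat f(\nnweight^*;\phi(s',b))\Big|\le\max_{b\in\cA}|\la g(s',b),\nnweight-\nnweight^*\ra|=|\la g(s',b_{\max}^{\nnweight-\nnweight^*}),\nnweight-\nnweight^*\ra|,
\end{align*}
where $b_{\max}^{\nnweight-\nnweight^*}$ is precisely the maximizer entering the definition of $\bSigma_{\pi}^*$ in \eqref{eq:def_feature_matrix_max}, evaluated along the direction $\nnweight-\nnweight^*$. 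By Cauchy--Schwarz over the expectation, $|C|\le\sqrt{A}\,\sqrt{B}$ with $B:=\EE_{\mu,\pi,\cP}[\la g(s',b_{\max}^{\nnweight-\nnweight^*}),\nnweight-\nnweight^*\ra^2]$. Since $\mu$ is the stationary distribution of the Markov chain induced by $\pi$ and $\cP$, the next state $s'$ has the same marginal law as $s$, so \eqref{eq:def_feature_matrix_exp}--\eqref{eq:def_feature_matrix_max} identify $A=m(\nnweight-\nnweight^*)^\top\bSigma_{\pi}(\nnweight-\nnweight^*)$ and $B=m(\nnweight-\nnweight^*)^\top\bSigma_{\pi}^*(\nnweight-\nnweight^*)(\nnweight-\nnweight^*)$.

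Finally I would invoke Assumption \ref{asp:regularity_target_policy_matrix} along the direction $\nnweight-\nnweight^*$: from $\bSigma_{\pi}-\alpha\gamma^2\bSigma_{\pi}^*(\nnweight-\nnweight^*)\succ\zero$ we obtain $A>\alpha\gamma^2 B$, i.e. $\gamma\sqrt{B}<\alpha^{-1/2}\sqrt{A}$, whence
\begin{align*}
\la\overline\gradlin(\nnweight)-\overline\gradlin(\nnweight^*),\nnweight-\nnweight^*\ra\ge A-\gamma\sqrt{AB}=\sqrt{A}\big(\sqrt{A}-\gamma\sqrt{B}\big)\ge(1-\alpha^{-1/2})A,
\end{align*}
which is the claim. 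The only genuinely non-routine point is the handling of the $\max$ operator: the Lipschitz bound $|\max_b u-\max_b v|\le\max_b|u-v|$ lets us collapse the difference of maxima to a single linear functional whose second moment is exactly encoded by $\bSigma_{\pi}^*$, and one must be careful that (i) the relevant maximizer is the one appearing in Assumption \ref{asp:regularity_target_policy_matrix} applied to the direction $\nnweight-\nnweight^*$ (so that the assumption, stated for all directions, is usable), and (ii) the stationarity of $\mu$ is what makes the $s'$-expectation expressible through $\bSigma_{\pi}$ and $\bSigma_{\pi}^*$; everything else is the Cauchy--Schwarz computation above.
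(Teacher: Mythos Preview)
Your proof is correct and follows essentially the same route as the paper: both decompose the inner product into the $\bSigma_{\pi}$ quadratic form minus $\gamma$ times a cross term, bound the difference of maxima by $\max_b|\la g(s',b),\nnweight-\nnweight^*\ra|$, apply Cauchy--Schwarz to obtain the $\bSigma_{\pi}^*(\nnweight-\nnweight^*)$ norm, and then invoke Assumption~\ref{asp:regularity_target_policy_matrix} to conclude. Your final algebraic step $\sqrt{A}(\sqrt{A}-\gamma\sqrt{B})\ge(1-\alpha^{-1/2})A$ is in fact a slightly cleaner version of the paper's argument (which rationalizes via $(a-b)=(a^2-b^2)/(a+b)$ before arriving at the same bound), and you are more explicit than the paper about the stationarity of $\mu$ needed to identify the $s'$-expectation with $\bSigma_{\pi}^*$.
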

\subsection{Proof of Theorem \ref{thm:converge_linear}}
Now we can integrate the above results and obtain proof of Theorem \ref{thm:converge_linear}.
\begin{proof}[Proof of Theorem \ref{thm:converge_linear}]
By Algorithm \ref{alg:neural_q_learning} and the non-expansiveness of projection  $\proj_{\bTheta}$, we have
\begin{align}\label{eq:converge_para_norm_decomp}
    \|\nnweight_{t+1}-\nnweight^*\|_2^2&=\|\proj_{\bTheta}\big(\nnweight_t-\eta_t\gb_t\big)-\nnweight^*\|_2^2\notag\\
    &\leq\|\nnweight_t-\eta_t\gb_t-\nnweight^*\|_2^2\notag\\
    &=\|\nnweight_t-\nnweight^*\|_2^2+\eta_t^2\|\gb_t\|_2^2-2\eta_t\la\gb_t,\nnweight_t-\nnweight^*\ra.
\end{align}
We need to find an upper bound for the gradient norm and a lower bound for the inner product. According to Definition \ref{defin:approx_stat}, the approximate stationary point $\nnweight^*$ of Algorithm \ref{alg:neural_q_learning} satisfies $\la\overline\gradlin(\nnweight^*),\nnweight-\nnweight^*\ra\geq 0$ for all $\nnweight\in\bTheta$. The inner product in \eqref{eq:converge_para_norm_decomp} can be decomposed into
\begin{align}\label{eq:converge_para_product_decomp}
   \la\gb_t,\nnweight_t-\nnweight^*\ra
   &=\la\gb_t-\gradlin_t(\nnweight_t),\nnweight_t-\nnweight^*\ra+\la\gradlin_t(\nnweight_t)-\overline\gradlin(\nnweight_t),\nnweight_t-\nnweight^*\ra +\la\overline\gradlin(\nnweight_t),\nnweight_t-\nnweight^*\ra \notag\\
   &\geq\la\gb_t-\gradlin_t(\nnweight_t),\nnweight_t-\nnweight^*\ra +\la\gradlin_t(\nnweight_t)-\overline\gradlin(\nnweight_t),\nnweight_t-\nnweight^*\ra \notag\\
   &\qquad+\la\overline\gradlin(\nnweight_t)-\overline\gradlin(\nnweight^*),\nnweight_t-\nnweight^*\ra.
\end{align}
Substituting \eqref{eq:converge_para_product_decomp} into \eqref{eq:converge_para_norm_decomp}, we have
\begin{align}\label{eq:converge_para_combined}
    \|\nnweight_{t+1}-\nnweight^*\|_2^2&\leq\|\nnweight_t-\nnweight^*\|_2^2+\eta_t^2\|\gb_t\|_2^2-2\eta_t\underbrace{\la\gb_t-\gradlin_t(\nnweight_t),\nnweight_t-\nnweight^*\ra}_{I_1} \notag\\
   &\qquad-2\eta_t\underbrace{\la\gradlin_t(\nnweight_t)-\overline\gradlin(\nnweight_t),\nnweight_t-\nnweight^*\ra}_{I_2}-2\eta_t\underbrace{\la\overline\gradlin(\nnweight_t)-\overline\gradlin(\nnweight^*),\nnweight_t-\nnweight^*\ra}_{I_3}   .
\end{align}
Note that the linearization error characterized in Lemma \ref{lemma:linearization_error_product} only holds within a small neighborhood of the initial point $\btheta_0$. In the rest of this proof, we will assume that $\btheta_0,\btheta_1,\ldots\btheta_T\in\BB(\btheta_0,\omega)$ for some $\omega>0$. We will verify this condition at the end of this proof.

Recall constraint set defined in \eqref{eq:def_constrain_set}. We have $\bTheta=\BB(\nnweight_0,\omega)=\{\nnweight:\|\Wb_l-\Wb_l^{(0)}\|_F\leq\omega, \forall l=1,\ldots,\layerLen\}$ and that $m$ and $\omega$ satisfy the condition in \eqref{eq:constraint_omega_m}.
\\\noindent\textbf{Term $I_1$} is the error of the local linearization of $f(\nnweight)$ at $\nnweight_0$. 
By Lemma \ref{lemma:linearization_error_product}, with probability at least $1-2\delta-3\layerLen^2\exp(- C_1m\omega^{2/3}\layerLen)$ over the randomness of the initial point $\nnweight_0$, we have
\begin{align}\label{eq:converge_para_product_decomp_term1}
    |\la\gb_t-\gradlin_t(\nnweight_t),\nnweight_t-\nnweight^*\ra|
    &\leq  C_2(2+\gamma) m^{-1/6}\sqrt{\log m\log(T/\delta)}
\end{align}
holds uniformly for all $\nnweight_t,\nnweight^*\in\bTheta$, where we used the fact that $\omega=C_0m^{-1/2}\layerLen^{-9/4}$.
\\\noindent\textbf{Term $I_2$} is the bias of caused by the non-i.i.d. data $(s_t,a_t,s_{t+1})$ used in the update of Algorithm \ref{alg:neural_q_learning}. Conditional on the initialization, by Lemma \ref{lemma:markov_chain_sum_bound}, we have 
\begin{align}\label{eq:converge_para_product_decomp_term2}
    \EE[\la\gradlin_t(\nnweight_t)-\overline\gradlin(\nnweight_t),\nnweight_t-\nnweight^*\ra|\nnweight_0]
    &\leq C_3(m\log(T/\delta)+m^2\omega^2)\tau^*\eta_{\max\{0,t-\tau^*\}},
\end{align}
where $\tau^*=\min\{t=0,1,2,\ldots|\lambda\rho^t\leq\eta_T\}$ is the mixing time of the Markov chain $\{s_t,a_t\}_{t=0,1,\ldots}$.
\\
\noindent\textbf{Term $I_3$} is the estimation error for the linear function approximation. By Lemma \ref{lemma:estimationerr_linear}, we have
\begin{align}\label{eq:converge_para_product_decomp_term3}
        \la\overline\gradlin(\nnweight_t)-\overline\gradlin(\nnweight^*),\nnweight_t-\nnweight^*\ra\geq \beta\EE\big[\big(\hat f(\nnweight_t)-\hat f(\nnweight^*)\big)^2\big|\nnweight_0\big],
\end{align}
where $\beta=(1-\alpha^{-1/2})\in(0,1)$ is a constant. Substituting \eqref{eq:converge_para_product_decomp_term1}, \eqref{eq:converge_para_product_decomp_term2} and \eqref{eq:converge_para_product_decomp_term3} into \eqref{eq:converge_para_combined}, we have it holds that
\begin{align}
    \|\nnweight_{t+1}-\nnweight^*\|_2^2
    &\leq\|\nnweight_t-\nnweight^*\|_2^2+\eta_t^2C_4^2(2+\gamma)^2 m\log(T/\delta) \notag\\
   &\qquad+2\eta_tC_2(2+\gamma) m^{-1/6}\sqrt{\log m\log(T/\delta)}-2\eta_t\beta\EE\big[\big(\hat f(\nnweight_t)-\hat f(\nnweight^*)\big)^2\big|\nnweight_0\big]\notag\\
   &\qquad+ 2\eta_tC_3(m\log(T/\delta)+m^2\omega^2)\tau^*\eta_{\max\{0,t-\tau^*\}}
\end{align}
with probability at least $1-2\delta-3\layerLen^2\exp(- C_1m\omega^{2/3}\layerLen)$ over the randomness of the initial point $\nnweight_0$, where we used the fact that $\|\gb_t\|_F\leq C_4(2+\gamma)\sqrt{m\log(T/\delta)}$ from Lemma \ref{lemma:linearization_error_product}. Rearranging the above inequality yields 
\begin{align*}
    \EE\big[\big(\hat f(\nnweight_t)-\hat f(\nnweight^*)\big)^2\big|\nnweight_0\big]
    &\leq \frac{\|\nnweight_t-\nnweight^*\|_2^2-\|\nnweight_{t+1}-\nnweight^*\|_2^2}{2\beta\eta_t}+\frac{ C_2(2+\gamma) m^{-1/6}\sqrt{\log m\log(T/\delta)}}{\beta}\\
    &\qquad+\frac{C_4(2+\gamma)^2  m\log(T/\delta)\eta_t}{\beta}
   + \frac{C_3m(\log(T/\delta)+m\omega^2)\tau^*\eta_{\max\{0,t-\tau^*\}}}{\beta},
\end{align*}
with probability at least $1-2\delta-3\layerLen^2\exp(- C_1m\omega^{2/3}\layerLen)$ over the randomness of the initial point $\nnweight_0$. Recall the choices of the step sizes  $\eta_0=\ldots=\eta_T=1/(2\beta m \sqrt{T})$ and the radius $\omega=C_0m^{-1/2}\layerLen^{-9/4}$. Dividing the above inequality by $T$ and telescoping it from $t=0$ to $T$ yields
\begin{align*}
    \frac{1}{T}\sum_{t=0}^{T-1}\EE\big[\big(\hat f(\nnweight_t)-\hat f(\nnweight^*)\big)^2\big|\nnweight_0\big]
    &\leq \frac{m\|\nnweight_0-\nnweight^*\|_2^2}{\sqrt{T}}+ \frac{C_2(2+\gamma)m^{-1/6}\sqrt{\log m\log(T/\delta)}}{\beta}\\
    &\qquad+\frac{C_4(2+\gamma)^2\log(T/\delta)\log T}{\beta^2\sqrt{T}}+\frac{ C_3(\log(T/\delta)+1)\tau^*\log T}{\beta\sqrt{T}}.
\end{align*}
For $\nnweight_0,\nnweight^*\in\bTheta$, again by  $\omega=Cm^{-1/2}L^{-9/4}$, we have $\|\nnweight_0-\nnweight^*\|_2^2\leq1/m$. Since $\hat f(\cdot)\in\cF_{\bTheta,m}$, by Lemma \ref{lemma:linearization_error_product}, it holds with probability at least $1-2\delta-3\layerLen^2\exp(- C_0 m^{2/3}\layerLen)$ over the randomness of the initial point $\nnweight_0$ that 
\begin{align}
   \frac{1}{T}\sum_{t=0}^{T-1}\EE\big[\big(\hat f(\nnweight_{t})-\hat f(\nnweight^*)\big)^2\big|\nnweight_0\big]
   &\leq\frac{1}{\sqrt{T}}+\frac{ C_1\tau^*\log(T/\delta)\log T}{\beta^2\sqrt{T}}+\frac{C_2 \sqrt{\log m\log(T/\delta)}}{\beta m^{1/6}},\notag
\end{align}
where we used the fact that $\gamma<1$. This completes the proof.
\end{proof}

\subsection{Proof of Theorem \ref{thm:converge_optimal}}
Before we prove the global convergence of Algorithm \ref{alg:neural_q_learning}, we present the following lemma that shows that near the initialization point $\nnweight_0$, the neural network function $f(\nnweight;\xb)$ is almost linear in $\nnweight$ for all unit input vectors.
\begin{lemma}[Theorems 5.3 and 5.4 in \citet{cao2019generalization}]\label{lemma:local_linear}
Let $\nnweight_0=(\Wb_0^{(1)\top},\ldots,\Wb_0^{(\layerLen)\top})^{\top}$ be the initial point and $\nnweight=(\Wb^{(1)\top},\ldots,\Wb^{(\layerLen)\top})^{\top}\in\BB(\nnweight_0,\omega)$ be a point in the neighborhood of $\nnweight_0$. If 
\begin{align*}
    m&\geq C_1\max\{d\layerLen^2\log(m/\delta),\omega^{-4/3}\layerLen^{-8/3}\log(m/(\omega\delta))\},\\
    \omega&\leq C_2\layerLen^{-5}(\log m)^{-3/2},
\end{align*}
then for all $\xb\in S^{d-1}$, with probability at least $1-\delta$ it holds that
\begin{align*}
    |f(\nnweight;\xb)-\hat f(\nnweight;\xb)|
    &\leq \omega^{1/3}L^{8/3}\sqrt{m\log m}\sum_{l=1}^{\layerLen}\big\|\Wb^{(l)}-\Wb_0^{(l)}\big\|_2+C_3\layerLen^3\sqrt{m}\sum_{l=1}^{\layerLen}\big\|\Wb^{(l)}-\Wb_0^{(l)}\big\|_2^2.
\end{align*}
Under the same conditions on $m$ and $\omega$, if $\nnweight_t\in\BB(\nnweight_0,\omega)$ for all $t=1,\ldots,T$, then with probability at least $1-\delta$, we have $|f(\nnweight_t;\phi(s_t,a_t))|\leq C_4\sqrt{\log(T/\delta)}$ for all $t\in[T]$.
\end{lemma}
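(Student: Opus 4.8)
The plan is to obtain both claims by specializing Theorems 5.3 and 5.4 of \citet{cao2019generalization} to the architecture in \eqref{def:dnn}, so the first task is bookkeeping: verify that the network here---with the $\sqrt{m}$ output rescaling, the $N(0,1/m)$ Gaussian initialization of every $\Wb_l^{(0)}$, and the unit-norm input $\|\xb\|_2=1$---matches the normalization used in that reference up to absorbing constants, and that the width and radius conditions $m\geq C_1\max\{d\layerLen^2\log(m/\delta),\omega^{-4/3}\layerLen^{-8/3}\log(m/(\omega\delta))\}$ and $\omega\leq C_2\layerLen^{-5}(\log m)^{-3/2}$ imply the hypotheses of those theorems. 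The recorded bound then has two separate pieces to establish: a semismoothness (local-linearity) estimate on $|f(\nnweight;\xb)-\hat f(\nnweight;\xb)|$, and an $O(\sqrt{\log(T/\delta)})$ bound on the output magnitude along the trajectory.

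For the local-linearity bound, the central object is the first-order Taylor remainder $f(\nnweight;\xb)-\hat f(\nnweight;\xb)=f(\nnweight;\xb)-f(\nnweight_0;\xb)-\la\nabla_{\nnweight}f(\nnweight_0;\xb),\nnweight-\nnweight_0\ra$. Because $f$ is piecewise linear in $\nnweight$ for fixed $\xb$, this remainder is driven entirely by the ReLU activation patterns that flip between $\nnweight_0$ and $\nnweight$. I would (i) use Gaussian concentration to show that at initialization each hidden representation has Euclidean norm concentrated at $\Theta(1)$ and that the per-layer Jacobians have operator norm $O(1)$; (ii) show that within the ball $\BB(\nnweight_0,\omega)$ the fraction of neurons whose activation sign flips is $\tilde O(\omega^{2/3})$ per layer, which is the mechanism that produces the $\omega^{1/3}$ prefactor once the flipped-neuron contributions are aggregated against the $\Theta(\sqrt m)$ gradient scale; and (iii) propagate these estimates through the $\layerLen$ layers, collecting the depth factors $L^{8/3}$ and $L^3$. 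The first (linear-in-perturbation) term then captures the error due to flipped activations while the second (quadratic) term is the standard smooth second-order remainder, giving the stated two-term bound uniformly over $\xb\in S^{d-1}$ with probability $1-\delta$.

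For the boundedness claim, I would first control the value at initialization: for fixed unit $\xb$, $f(\nnweight_0;\xb)$ is a sum of products of Gaussian weights whose variance is $O(1)$ under the $1/m$ scaling, so a sub-Gaussian tail bound gives $|f(\nnweight_0;\xb)|\leq C\sqrt{\log(1/\delta)}$ with high probability. Then for any $\nnweight_t\in\BB(\nnweight_0,\omega)$ I would write $|f(\nnweight_t;\xb)|\leq|f(\nnweight_0;\xb)|+|\la\nabla_{\nnweight}f(\nnweight_0;\xb),\nnweight_t-\nnweight_0\ra|+|f(\nnweight_t;\xb)-\hat f(\nnweight_t;\xb)|$, bounding the middle term by $\|\nabla_{\nnweight}f(\nnweight_0;\xb)\|_2\cdot\omega=O(\sqrt{m})\cdot\omega$ and the last term by the semismoothness estimate just derived; under the stated radius both are lower order relative to the initialization term. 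Taking a union bound over $t\in[T]$, and replacing the fixed-$\xb$ tail by the realized inputs $\phi(s_t,a_t)$, inflates $\log(1/\delta)$ to $\log(T/\delta)$ and yields $|f(\nnweight_t;\phi(s_t,a_t))|\leq C_4\sqrt{\log(T/\delta)}$ for all $t\in[T]$.

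The main obstacle is the activation-pattern flip count in step (ii): getting the exponent $\omega^{1/3}$, rather than a weaker power, requires the sharp anti-concentration estimate for the pre-activations at initialization, and then carefully tracking how a flip at one layer perturbs all downstream layers so that the depth dependence collapses to $L^{8/3}$ and $L^3$ instead of blowing up exponentially in $\layerLen$. Matching these precise exponents is exactly what Theorem 5.3 of \citet{cao2019generalization} supplies, so in the write-up I would lean directly on that statement rather than reproving the concentration estimates from scratch, confining the new work to the convention-matching and the union-bound argument for the output bound.
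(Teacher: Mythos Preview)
The paper does not prove this lemma at all: it is simply imported as Theorems~5.3 and~5.4 of \citet{cao2019generalization}, with no argument given beyond the citation. Your proposal therefore already goes well beyond what the paper does---you sketch the underlying mechanism (activation-pattern flips driving the $\omega^{1/3}$ prefactor, layerwise propagation yielding the $\layerLen^{8/3}$ and $\layerLen^3$ depth factors, sub-Gaussianity of $f(\nnweight_0;\xb)$ for the output bound) and then correctly observe that you would ultimately lean on the cited theorems rather than reproving the sharp concentration and anti-concentration estimates from scratch. That is exactly the right posture, and the convention-checking and union-bound packaging you describe for the second claim is sound; in particular, since the behavior policy $\pi$ is fixed, the inputs $\phi(s_t,a_t)$ are independent of the random initialization $\nnweight_0$, so the per-input sub-Gaussian tail plus a union bound over $t\in[T]$ legitimately produces the $\sqrt{\log(T/\delta)}$ factor.
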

\begin{proof}[Proof of Theorem \ref{thm:converge_optimal}]

To simplify the notation, we abbreviate $\EE[\cdot\big|\nnweight_0]$ as $\EE[\cdot]$ in the rest of this proof. Therefore, we have
\begin{align}\label{eq:converg_policy_decmop}
    \EE\big[(Q(s,a;\nnweight_T)-Q^*(s,a))^2\big]
    &\leq 3\EE\big[\big(f(\nnweight_T;s,a)-\hat f(\nnweight_T;s,a)\big)^2\big]+3\EE\big[\big(\hat f(\nnweight_T;s,a)-\hat f(\nnweight^*;s,a)\big)^2\big]\notag\\
    &\qquad+3\EE\big[\big(\hat f(\nnweight^*;s,a)-Q^*(s,a)\big)^2\big].
\end{align}
By Lemma \ref{lemma:local_linear} and the parameter choice that $\omega=C_1/(\sqrt{m}L^{9/4})$, we have
\begin{align}\label{eq:converg_policy_term1}
    \EE[(f(\nnweight_T;s,a)-\hat f(\nnweight_T;s,a))^2]&\leq C_2(\omega^{4/3}\layerLen^4 \sqrt{m\log m})^2\notag\\
    &\leq C_1^{4/3}C_2m^{-1/3}\log m
\end{align}
with probability at least $1-\delta$. Recall that $\hat f(\nnweight^*;\cdot,\cdot)$ is the fixed point of $\proj_{\cF}\optBellop$ and $Q^*(\cdot,\cdot)$ is the fixed point of $\optBellop$. Then we have
\begin{align*}
    \big|\hat f(\nnweight^*;s,a)-Q^*(s,a)\big|
    &=\big|\hat f(\nnweight^*;s,a)-\proj_{\cF_{\bTheta,m}}Q^*(s,a)+\proj_{\cF_{\bTheta,m}}Q^*(s,a)-Q^*(s,a)\big|\\
    &=\big|\proj_{\cF_{\bTheta,m}}\optBellop\hat f(\nnweight^*;s,a)-\proj_{\cF_{\bTheta,m}}\optBellop Q^*(s,a)+\proj_{\cF_{\bTheta,m}}Q^*(s,a)-Q^*(s,a)\big|\\
    &\leq\big|\proj_{\cF_{\bTheta,m}}\optBellop\hat f(\nnweight^*;s,a)-\proj_{\cF_{\bTheta,m}}\optBellop Q^*(s,a)\big|+\big|\proj_{\cF_{\bTheta,m}}Q^*(s,a)-Q^*(s,a)\big|\\    
   &\leq \gamma|\hat f(\nnweight^*;s,a)-Q^*(s,a)|+\big|\proj_{\cF_{\bTheta,m}}Q^*(s,a)-Q^*(s,a)\big|,
\end{align*}
where the first inequality follows the triangle inequality and in the second inequality we used the fact that $\proj_{\cF_{\bTheta,m}}\optBellop$ is $\gamma$-contractive. This further leads to 
\begin{align}\label{eq:converg_policy_term3}
    (1-\gamma)|\hat f(\nnweight^*;s,a)-Q^*(s,a)|\leq |\proj_{\cF_{\bTheta,m}}Q^*(s,a)-Q^*(s,a)|.
\end{align}
Combining \eqref{eq:converg_policy_term1}, \eqref{eq:converg_policy_term3} and the result from Theorem \ref{thm:converge_linear} and substituting them back into \eqref{eq:converg_policy_decmop}, we have
\begin{align*}
   \EE\big[(Q(s,a;\nnweight_T)-Q^*(s,a))^2\big] 
   &\leq \frac{3\EE\big[\big(\proj_{\cF_{\bTheta,m}}Q^*(s,a)-Q^*(s,a)\big)^2\big]}{(1-\gamma)^{2}}+\frac{1}{\sqrt{T}}\\
   &\qquad+\frac{ C_2\tau^*\log(T/\delta)\log T}{\beta^2\sqrt{T}}+\frac{C_3\sqrt{\log(T/\delta) \log m}}{\beta m^{1/6}},\notag
\end{align*}
with probability at least $1-3\delta-\layerLen^2\exp(-C_6m^{2/3}\layerLen)$, which completes the proof.
\end{proof}

\section{Conclusions}
In this paper, we provide the first finite-time analysis of Q-learning with neural network function approximation (i.e., neural Q-learning), where the data are generated from a Markov decision process and the action-value function is approximated by a deep ReLU neural network. We prove that neural Q-learning converge to the optimal action-value function up to the approximation error with $O(1/\sqrt{T})$ rate, where $T$ is the number of iterations. Our proof technique is of independent interest and can be extended to analyze other deep reinforcement learning algorithms. One interesting future direction would be to remove the projection step in our algorithm by applying the ODE based analysis in \citet{srikant2019finite,chen2019Performance}.

\appendix

\section{Proof of Supporting Lemmas}
In this section, we present the omitted proof of the technical lemmas used in out main theorems.
\subsection{Proof of Lemma \ref{lemma:linearization_error_product}}
Before we prove the error bound for the local linearization, we first present some useful lemmas from recent studies of overparameterized deep neural networks. Note that in the following lemmas, $\{C_i\}_{i=1,\ldots}$ are universal constants that are independent of problem parameters such as $d,\nnweight,m,\layerLen$ and their values can be different in different contexts. The first lemma states the uniform upper bound for the gradient of the deep neural network. Note that by definition, our parameter $\nnweight$ is a long vector containing the concatenation of the vectorization of all the weight matrices. Correspondingly, the gradient $\nabla_{\nnweight}f(\nnweight;\xb)$ is also a long vector.
\begin{lemma}[Lemma B.3 in \citet{cao2019generalization2}]\label{lemma:gradient_function_bound}
Let $\nnweight\in\BB(\nnweight_0,\omega)$ with the radius satisfying $C_1d^{3/2}L^{-1}m^{-3/2}\leq\omega\leq C_2\layerLen^{-6}(\log m)^{-3/2}$. 
Then for all unit vectors in $\RR^d$, i.e., $\xb\in S^{d-1}$, the gradient of the neural network $f$ defined in \eqref{def:dnn} is bounded as $\|\nabla_{\nnweight} f(\nnweight;\xb)\|_2\leq C_3\sqrt{m}$ with probability at least $1-\layerLen^2\exp(-C_4 m\omega^{2/3}\layerLen)$.
\end{lemma}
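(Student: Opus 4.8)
The plan is to reduce the squared gradient norm to a layerwise sum of products of forward activations and backpropagated signals, bound each factor at the Gaussian initialization $\nnweight_0$, and then transfer the bound to an arbitrary $\nnweight\in\BB(\nnweight_0,\omega)$ by a perturbation argument; this is exactly the route taken in the overparameterization literature cited in the excerpt, whose forward/backward concentration and perturbation lemmas (in the style of \citet{allen2019convergence}) I would import. Concretely, write $\xb_0(\nnweight)=\xb$ and $\xb_l(\nnweight)=\sigma(\Wb_l\xb_{l-1}(\nnweight))=\Db_l\Wb_l\xb_{l-1}(\nnweight)$ for $1\le l\le L-1$, where $\Db_l=\Db_l(\nnweight)=\mathrm{diag}(\mathbf{1}\{\Wb_l\xb_{l-1}(\nnweight)>0\})$, and set $\bb_l(\nnweight)^{\top}=\Wb_L\Db_{L-1}\Wb_{L-1}\cdots\Wb_{l+1}\Db_l$ with $\bb_L\equiv 1$. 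Differentiating \eqref{def:dnn} gives $\nabla_{\Wb_l}f(\nnweight;\xb)=\sqrt{m}\,\bb_l(\nnweight)\,\xb_{l-1}(\nnweight)^{\top}$, so $\|\nabla_{\nnweight}f(\nnweight;\xb)\|_2^2=m\sum_{l=1}^{L}\|\bb_l(\nnweight)\|_2^2\|\xb_{l-1}(\nnweight)\|_2^2$, and it suffices to show $\sum_{l=1}^{L}\|\bb_l(\nnweight)\|_2^2\|\xb_{l-1}(\nnweight)\|_2^2\le C$ uniformly over $\xb\in S^{d-1}$ and $\nnweight\in\BB(\nnweight_0,\omega)$.

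\textbf{Control at the initialization.} Using that the entries of $\Wb_l^{(0)}$ are i.i.d.\ Gaussian: (i) standard random-matrix bounds give $\|\Wb_l^{(0)}\|_2\le C$ for all $l$ with probability $1-L\exp(-\Omega(m))$; (ii) a layerwise concentration argument---the ReLU keeps roughly half of the squared mass, balanced against the variance of $\Wb_l^{(0)}$---shows, after a union bound over $l\in[L]$, that $\|\xb_l(\nnweight_0)\|_2$ stays within a constant factor of $\|\xb\|_2=1$ for every $l$; (iii) a symmetric backward argument, conditioning on the activation patterns $\Db_l(\nnweight_0),\dots,\Db_{L-1}(\nnweight_0)$ (which depend only on the forward pass) and peeling off one fresh Gaussian layer at a time, gives $\|\bb_l(\nnweight_0)\|_2\le C$ for all $l$.

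\textbf{Perturbation and assembly.} For $\nnweight\in\BB(\nnweight_0,\omega)$ we have $\|\Wb_l-\Wb_l^{(0)}\|_2\le\|\Wb_l-\Wb_l^{(0)}\|_F\le\omega$, hence $\|\Wb_l\|_2\le 2C$. Two facts then bound the drift of the quantities above: (a) $\|\xb_l(\nnweight)-\xb_l(\nnweight_0)\|_2$ and $\|\bb_l(\nnweight)-\bb_l(\nnweight_0)\|_2$ are $O(\mathrm{poly}(L)\,\omega)$, which is $o(1)$ under $\omega\le C_2 L^{-6}(\log m)^{-3/2}$; (b) with probability $1-L\exp(-\Omega(m\omega^{2/3}L))$, the number of neurons per layer whose activation sign differs between $\nnweight_0$ and $\nnweight$ is $O(m\omega^{2/3})$, because each hidden pre-activation is Gaussian with standard deviation $\Theta(1/\sqrt m)$ and thus lies in the $O(\omega)$-perturbation window only with probability $O(\omega^{2/3})$, after which a Bernstein bound controls the count. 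Facts (a)--(b) keep every $\|\xb_{l-1}(\nnweight)\|_2$ and $\|\bb_l(\nnweight)\|_2$ within a constant factor of their values at $\nnweight_0$; substituting into $\|\nabla_{\nnweight}f\|_2^2=m\sum_l\|\bb_l\|_2^2\|\xb_{l-1}\|_2^2$ yields $\|\nabla_{\nnweight}f(\nnweight;\xb)\|_2\le C_3\sqrt m$. Collecting the failure events over the $O(L^2)$ products of consecutive layers and the $L$ forward layers produces the stated probability $1-L^2\exp(-C_4 m\omega^{2/3}L)$; the lower bound $\omega\ge C_1 d^{3/2}L^{-1}m^{-3/2}$ and the width condition on $m$ are precisely what render the concentration inequalities above non-vacuous.

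\textbf{Main obstacle.} The delicate point is the backward bound and its perturbation, i.e.\ (iii) together with (a): naively $\|\bb_l(\nnweight)\|_2\le\prod_{j>l}\|\Wb_j\|_2$ is exponential in the depth $L$, so the argument must exploit the near-independence between the activation matrices $\Db_j$ and the fresh weights above them---handled either by careful sequential conditioning or by an $\epsilon$-net over the finitely many sign patterns reachable within $\BB(\nnweight_0,\omega)$. Everything else is routine Gaussian concentration and union bounds over layers.
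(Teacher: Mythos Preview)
The paper does not prove this lemma at all; it is imported verbatim as Lemma~B.3 from \citet{cao2019generalization2} and used as a black box. Your proposal instead reconstructs the actual argument behind that cited result, and the outline you give is faithful to how it is established in \citet{cao2019generalization2} and \citet{allen2019convergence}: the rank-one layerwise identity $\nabla_{\Wb_l}f=\sqrt{m}\,\bb_l\xb_{l-1}^{\top}$ leading to $\|\nabla_{\nnweight}f\|_2^2=m\sum_l\|\bb_l\|_2^2\|\xb_{l-1}\|_2^2$, the forward and backward Gaussian concentration at $\nnweight_0$, and the $O(m\omega^{2/3})$ sign-flip perturbation bound are exactly the ingredients used there. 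You also correctly isolate the only non-routine step---keeping $\|\bb_l\|_2$ bounded without the naive $\prod_j\|\Wb_j\|_2$ blowup---and name the right fix (sequential conditioning on the activation patterns). So your route is not a different proof so much as a filling-in of what the present paper chose to cite; the only thing it buys over the paper's treatment is self-containment.
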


The second lemma provides the perturbation bound for the gradient of the neural network function. Note that the original theorem holds for any fixed $d$ dimensional unit vector $\xb$. However, due to the choice of $\omega$ and its dependency on $m$ and $d$, it is easy to modify the results to hold for all $\xb\in S^{d-1}$.
\begin{lemma}[Theorem 5 in \cite{allen2019convergence}]\label{lemma:gradient_difference}
Let $\nnweight\in\BB(\nnweight_0,\omega)$ with the radius satisfying
\begin{align*}
    C_1d^{3/2}L^{-3/2}m^{-3/2}(\log m)^{-3/2}\leq\omega\leq C_{2}\layerLen^{-9/2}(\log m)^{-3}.
\end{align*}
Then for all $\xb\in S^{d-1}$, with probability at least $1-\exp(-C_{3} m\omega^{2/3}\layerLen)$ over the randomness of $\nnweight_0$, it holds that
\begin{align*}
    \|\nabla_{\nnweight} f(\nnweight;\xb)-\nabla_{\nnweight} f(\nnweight_0;\xb)\|_2\leq C_{4}\omega^{1/3}\layerLen^3\sqrt{\log m}\|\nabla_{\nnweight} f(\nnweight_0;\xb)\|_2.
\end{align*}
\end{lemma}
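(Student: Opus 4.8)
The plan is to prove the bound by controlling separately how the forward hidden activations and the backward (backpropagated) error vectors of the network change as $\nnweight$ moves from $\nnweight_0$ into the ball $\BB(\nnweight_0,\omega)$. The starting point is a factored form of the gradient: writing the hidden activations as $\xb_0=\xb$ and $\xb_l(\nnweight)=\sigma(\Wb_l\xb_{l-1}(\nnweight))$ for $l\geq 1$, the gradient with respect to the $l$-th weight matrix factors as $\nabla_{\Wb_l}f(\nnweight;\xb)=\sqrt{m}\,\bb_l(\nnweight)\,\xb_{l-1}(\nnweight)^{\top}$, where $\bb_l(\nnweight)$ is the backpropagated signal obtained by multiplying the top-layer weight against the product of the intermediate weight matrices interleaved with the diagonal activation-pattern matrices $\Db_j(\nnweight)=\mathrm{diag}(\sigma'(\Wb_j\xb_{j-1}(\nnweight)))$. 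Since the full gradient norm is the square root of the sum over $l$ of the squared Frobenius norms of these blocks, it suffices to bound each block difference $\nabla_{\Wb_l}f(\nnweight)-\nabla_{\Wb_l}f(\nnweight_0)$ and then aggregate.

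The first ingredient is forward stability. At the Gaussian initialization a standard norm-preservation argument shows that $\|\xb_l(\nnweight_0)\|_2=\Theta(1)$ for every layer with probability at least $1-\exp(-\Omega(m/\layerLen))$. Combining this with $\|\Wb_l-\Wb_l^{(0)}\|_F\leq\omega$ and the spectral-norm control of Gaussian weight matrices, I would bound the forward perturbation $\|\xb_l(\nnweight)-\xb_l(\nnweight_0)\|_2$ by $O(\omega\sqrt{\layerLen})$ uniformly in $l$, using the $1$-Lipschitzness of ReLU to keep the error from amplifying across layers. The second, and most delicate, ingredient is a sparsity bound on the change of activation pattern: I would show that $\Db_l(\nnweight)$ and $\Db_l(\nnweight_0)$ differ in at most $s=O(m\omega^{2/3}\layerLen)$ coordinates per layer, with probability at least $1-\exp(-\Omega(m\omega^{2/3}\layerLen))$. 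This is the heart of the argument and follows from anti-concentration of the Gaussian pre-activations: a neuron flips its sign only if its pre-activation at initialization lies within $O(\omega)$ of zero, and the Gaussian density caps the number of such neurons.

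With these in hand I would establish backward stability. Expressing $\bb_l(\nnweight)-\bb_l(\nnweight_0)$ as a telescoping sum over the layers at which the activation pattern or the weight matrix differs, each term is controlled by the sparse sign-change count $s$, the spectral norms $\|\Wb_j\|_2=O(1)$, and a spectral bound on products of weight matrices restricted to a sparse set of active coordinates; after accounting for the depth these contribute the $\layerLen^3\sqrt{\log m}$ factor, yielding $\|\bb_l(\nnweight)-\bb_l(\nnweight_0)\|_2\leq C\omega^{1/3}\layerLen^3\sqrt{\log m}\,\|\bb_l(\nnweight_0)\|_2$. Finally, applying the triangle inequality to $\sqrt{m}(\bb_l\xb_{l-1}^{\top}-\bb_l^{(0)}\xb_{l-1}^{(0)\top})$, bounding each cross term by the product of a stable quantity and a perturbation, summing the squared block bounds over $l$, and identifying the leading factor $\sqrt{m}\,\|\bb_l(\nnweight_0)\|_2$ with $\|\nabla_{\nnweight}f(\nnweight_0;\xb)\|_2$ delivers the claimed inequality.

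The hard part will be the sign-change sparsity bound together with its propagation through the deep backward pass: the estimate must hold uniformly over $\xb\in S^{d-1}$ and must not degrade with depth, so the spectral bounds on products of up to $\layerLen$ weight matrices interleaved with perturbed diagonal matrices have to be tracked very carefully to extract exactly the $\omega^{1/3}\layerLen^3\sqrt{\log m}$ dependence rather than a factor exponential in $\layerLen$. This is precisely where the lower bound $\omega\geq C_1 d^{3/2}\layerLen^{-3/2}m^{-3/2}(\log m)^{-3/2}$ and the upper bound $\omega\leq C_2\layerLen^{-9/2}(\log m)^{-3}$ on the radius enter, ensuring the perturbation is small enough for the anti-concentration and spectral estimates to hold yet large enough that the concentration event has the stated probability.
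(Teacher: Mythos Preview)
The paper does not prove this lemma at all: it is stated as a direct citation of Theorem~5 in \citet{allen2019convergence} and is used as a black box in the proof of Lemma~\ref{lemma:linearization_error_product}. There is therefore no ``paper's own proof'' to compare against.

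That said, your sketch is a faithful reconstruction of the argument in the cited reference. The three ingredients you isolate---forward norm preservation and stability of the hidden activations, the anti-concentration bound giving at most $O(m\omega^{2/3}\layerLen)$ sign flips per layer, and backward stability via a telescoping expansion of the product of weight matrices and activation-pattern diagonals---are exactly the pillars of the Allen-Zhu et al.\ analysis, and the sign-change sparsity is indeed where the $\omega^{1/3}$ factor originates and where the probability bound $1-\exp(-\Omega(m\omega^{2/3}\layerLen))$ comes from. One small point: the identification ``$\sqrt{m}\,\|\bb_l(\nnweight_0)\|_2$ with $\|\nabla_{\nnweight}f(\nnweight_0;\xb)\|_2$'' at the end is layerwise, so the clean way to finish is to bound each block perturbation by $C\omega^{1/3}\layerLen^3\sqrt{\log m}$ times the corresponding block norm at initialization and then take the $\ell_2$ sum over $l$, rather than trying to identify a single block with the full gradient norm. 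Otherwise the plan is sound and matches the source.
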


Now we are ready to bound the linearization error.
\begin{proof}[Proof of Lemma \ref{lemma:linearization_error_product}]
Recall the definition of $\gb_t(\nnweight_t)$ and $\gradlin_t(\nnweight_t)$ in \eqref{eq:def_gradient} and \eqref{eq:def_grad_lin} respectively.  We have
\begin{align}\label{eq:linearization_error_product_decomp}
    \|\gb_t(\nnweight_t)-
    \gradlin_t(\nnweight_t)\|_2&=\big\|\nabla_{\nnweight}f(\nnweight_t;s_t,a_t)\Delta(s_t,a_t,s_{t+1};\nnweight_t)-\nabla_{\nnweight}\hat f(\nnweight_t;s_t,a_t)\hat\Delta(s_t,a_t,s_{t+1};\nnweight_t)\big\|_2\notag\\
    &\leq\big\|(\nabla_{\nnweight}f(\nnweight_t;s_t,a_t)-\nabla_{\nnweight}\hat f(\nnweight_t;s_t,a_t))\Delta(s_t,a_t,s_{t+1};\nnweight_t)\big\|_2\notag\\
    &\qquad+\big\|\nabla_{\nnweight}\hat f(\nnweight_t;s_t,a_t)\big(\Delta(s_t,a_t,s_{t+1};\nnweight_t)-\hat\Delta(s_t,a_t,s_{t+1};\nnweight_t)\big)\big\|_2.
\end{align}
Since $\hat f(\nnweight)\in\cF_{\bTheta,m}$, we have $\hat f(\nnweight)=f(\nnweight_0)+\la\nabla_{\nnweight}f(\nnweight_0),\nnweight-\nnweight_0\ra$ and $\nabla_{\nnweight} \hat f(\nnweight)=\nabla_{\nnweight}f(\nnweight_0)$. Then with probability at least $1-2\layerLen^2\exp(-C_1m\omega^{2/3}\layerLen)$, we have
\begin{align*}
    &\big\|(\nabla_{\nnweight}f(\nnweight_t;s_t,a_t)-\nabla_{\nnweight}\hat f(\nnweight_t;s_t,a_t))\Delta(s_t,a_t,s_{t+1};\nnweight_t)\big\|_2\\
    &=|\Delta(s_t,a_t,s_{t+1};\nnweight_t)|\cdot\big\|(\nabla_{\nnweight}f(\nnweight_t;s_t,a_t)-\nabla_{\nnweight} f(\nnweight_0;s_t,a_t))\big\|_2\\\
    &\leq C_2\omega^{1/3}\layerLen^{3}\sqrt{m\log m}|\Delta(s_t,a_t,s_{t+1};\nnweight_t)|,
\end{align*}
where the inequality comes from Lemmas \ref{lemma:gradient_function_bound} and \ref{lemma:gradient_difference}. By Lemma \ref{lemma:local_linear}, with probability at least $1-\delta$, it holds that
\begin{align*}
   |\Delta(s_t,a_t,s_{t+1};\nnweight_t)|= \Big|f(\nnweight_t;s_t,a_t)-r_t-\gamma\max_{b\in\cA}f(\nnweight_t;s_{t+1},b)\Big|\leq (2+\gamma)C_3\sqrt{\log (T/\delta)},
\end{align*}
which further implies that with probability at least $1-\delta-2\layerLen^2\exp(-C_1m\omega^{2/3}\layerLen)$, we have
\begin{align*}
    &\big\|(\nabla_{\nnweight}f(\nnweight_t;s_t,a_t)-\nabla_{\nnweight}\hat f(\nnweight_t;s_t,a_t))\Delta(s_t,a_t,s_{t+1};\nnweight_t)\big\|_2\\
    &\leq (2+\gamma)C_2C_3\omega^{1/3}\layerLen^{3}\sqrt{m\log m\log(T/\delta)}.
\end{align*}
For the second term in \eqref{eq:linearization_error_product_decomp}, we have
\begin{align}\label{eq:grad_diff_product_func_diff_term}
    &\big\|\nabla_{\nnweight}\hat f(\nnweight_t;s_t,a_t)\big(\Delta(s_t,a_t,s_{t+1};\nnweight_t)-\hat\Delta(s_t,a_t,s_{t+1};\nnweight_t)\big)\big\|_2\notag\\
    &\leq\big\|\nabla_{\nnweight}\hat f(\nnweight_t;s_t,a_t)\big(f(\nnweight_t;s_t,a_t)-\hat f(\nnweight_t;s_t,a_t)\big)\big\|_2\notag\\
    &\qquad+\Big\|\nabla_{\nnweight}\hat f(\nnweight_t;s_t,a_t)\Big(\max_{b\in\cA}f(\nnweight_t;s_{t+1},b)-\max_{b\in\cA}\hat f(\nnweight_t;s_{t+1},b)\Big)\Big\|_2\notag\\
    &\leq\big\|\nabla_{\nnweight}\hat f(\nnweight_t;s_t,a_t)\big\|_2\cdot\big|f(\nnweight_t;s_t,a_t)-\hat f(\nnweight_t;s_t,a_t)\big|\notag\\
    &\qquad+\big\|\nabla_{\nnweight}\hat f(\nnweight_t;s_t,a_t)\|_2\max_{b\in\cA}\big|f(\nnweight_t;s_{t+1},b)-\hat f(\nnweight;s_{t+1},b)\big|.
\end{align}
By Lemma \ref{lemma:local_linear}, with probability at least $1-\delta$ we have
\begin{align*}
    |f(\nnweight_t;s_t,a_t)-\hat f(\nnweight_t;s_t,a_t)|\leq \omega^{4/3}L^{11/3}\sqrt{m\log m}+C_4\omega^2 L^4\sqrt{m},
\end{align*}
for all $(s_t,a_t)\in\cS\times\cA$ such that $\|\phi(s_t,a_t)\|_2=1$. Substituting the above result into \eqref{eq:grad_diff_product_func_diff_term} and applying the gradient bound in Lemma \ref{lemma:gradient_function_bound}, we obtain with probability at least $1-\delta-\layerLen^2\exp(-C_1m\omega^{2/3}\layerLen)$ that
\begin{align*}
    &\big\|\nabla_{\nnweight}\hat f(\nnweight_t;s_t,a_t)\big(\Delta(s_t,a_t,s_{t+1};\nnweight_t)-\hat\Delta(s_t,a_t,s_{t+1};\nnweight_t)\big)\big\|_2\\
    &\leq C_5\omega^{4/3}L^{11/3}m\sqrt{\log m}+C_6\omega^2 L^4m .
\end{align*}
Note that the above results require that the choice of $\omega$ should satisfy all the constraints in Lemmas \ref{lemma:gradient_function_bound}, \ref{lemma:local_linear} and \ref{lemma:gradient_difference}, of which the intersection is
\begin{align*}
     C_7 d^{3/2}\layerLen^{-1}m^{-3/4}\leq\omega\leq C_8 \layerLen^{-6}(\log m)^{-3}.
\end{align*}
Therefore, the error of the local linearization of $\gb_t(\nnweight_t)$ can be upper bounded by
\begin{align*}
    |\la\gb_t(\nnweight_t)-\gradlin_t(\nnweight_t),\nnweight_t-\nnweight^*\ra|&\leq (2+\gamma)C_2C_3\omega^{1/3}\layerLen^{3}\sqrt{m\log m\log(T/\delta)}\|\nnweight_t-\nnweight^*\|_2\\
    &\qquad+\big( C_5\omega^{4/3}L^{11/3}m\sqrt{\log m}+C_6\omega^2 L^4m\big)\|\nnweight_t-\nnweight^*\|_2,
\end{align*}
which holds with probability at least $1-2\delta-3\layerLen^2\exp(- C_1 m\omega^{2/3}\layerLen)$ over the randomness of the initial point. 
For the upper bound of the norm of $\gb_t$, by Lemmas  \ref{lemma:local_linear} and \ref{lemma:gradient_function_bound}, we have
\begin{align*}
    \|\gb_t(\btheta_t)\|_2&=\Big\|\nabla_{\btheta}f(\btheta_t;s_t,a_t)\Big(f(\btheta_t;s_{t},a_t)-r_t-\gamma\max_{b\in\cA}f(\btheta_t;s_{t+1},b)\Big)\Big\|_2\\
    &\leq (2+\gamma)C_9\sqrt{m\log(T/\delta)}
\end{align*}
holds with probability at least $1-\delta-\layerLen^2\exp(-C_1m\omega^{2/3}\layerLen)$.
\end{proof}

\subsection{Proof of Lemma \ref{lemma:markov_chain_sum_bound}}
Let us define $\zeta_t(\nnweight)=\la\gradlin_t(\nnweight)-\overline\gradlin(\nnweight),\nnweight-\nnweight^*\ra$, which characterizes the bias of the data. Different from the similar quantity $\zeta_t$ in \citet{bhandari2018finite}, our definition is based on the local linearization of $f$, which is essential to the analysis in our proof. It is easy to verify that $\EE[\gradlin_t(\nnweight)]=\overline\gradlin(\nnweight)$ for any fixed and deterministic $\nnweight$. However, it should be noted that  $\EE[\gradlin_t(\nnweight_t)|\nnweight_t=\nnweight]\neq\overline\gradlin(\nnweight)$ because $\nnweight_t$ depends on all historical states and actions $\{s_t,a_t,s_{t-1},a_{t-1},\ldots\}$ and $\gradlin_t(\cdot)$ depends on the current observation $\{s_t,a_t,s_{t+1}\}$ and thus also depends on $\{s_{t-1},a_{t-1},s_{t-2},a_{t-2},\ldots\}$. Therefore, we need a careful analysis of Markov chains to decouple the dependency between $\nnweight_t$ and $\gradlin_t(\cdot)$.

The following lemma uses data processing inequality to provide an information theoretic control of coupling.
\begin{lemma}[Control of coupling, \citep{bhandari2018finite}]\label{lemma:control_coupling}
Consider two random variables $X$ and $Y$ that form the following Markov chain:
\begin{align*}
    X\rightarrow s_t\rightarrow s_{t+\tau}\rightarrow Y,
\end{align*}
where $t\in\{0,1,2,\ldots\}$ and $\tau>0$. Suppose Assumption \ref{asp:uniform_mixing} holds. Let $X'$ and $Y'$ be independent copies drawn from the marginal distributions of $X$ and $Y$ respectively, i.e., $\PP(X'=\cdot,Y'=\cdot)=\PP(X=\cdot)\otimes\PP(Y=\cdot)$. Then for any bounded function $h:\cS\times\cS\rightarrow\RR$, it holds that
\begin{align*}
    |\EE[h(X,Y)]-\EE[h(X',Y')]|\leq2\sup_{s,s'}|h(s,s')|\lambda\rho^{\tau}.
\end{align*}
\end{lemma}

\begin{proof}[Proof of Lemma \ref{lemma:markov_chain_sum_bound}]
The proof of this lemma is adapted from \cite{bhandari2018finite}, where the result was originally proved for linear function approximation of temporal difference learning. We first show that $\zeta_t(\nnweight)$ is Lipschitz. For any $\nnweight,\nnweight'\in\BB(\nnweight_0,\omega)$, we have
\begin{align*}
    \zeta_t(\nnweight)-\zeta_t(\nnweight')&=\la\gradlin_t(\nnweight)-\overline\gradlin(\nnweight),\nnweight-\nnweight^*\ra-\la\gradlin_t(\nnweight')-\overline\gradlin(\nnweight'),\nnweight'-\nnweight^*\ra\\
    &=\la\gradlin_t(\nnweight)-\overline\gradlin(\nnweight)-(\gradlin_t(\nnweight')-\overline\gradlin(\nnweight')),\nnweight-\nnweight^*\ra\\
    &\qquad+\la\gradlin_t(\nnweight')-\overline\gradlin(\nnweight'),\nnweight-\nnweight'\ra,
\end{align*}
which directly implies
\begin{align*}
    |\zeta_t(\nnweight)-\zeta_t(\nnweight')|&\leq\|\gradlin_t(\nnweight)-\gradlin_t(\nnweight')\|_2\cdot\|\nnweight-\nnweight^*\|_2+\|\overline\gradlin(\nnweight)-\overline\gradlin(\nnweight')\|_2\cdot\|\nnweight-\nnweight^*\|_2\\
    &\qquad+\|\gradlin_t(\nnweight')-\overline\gradlin(\nnweight')\|_2\cdot\|\nnweight-\nnweight'\|_2.
\end{align*}
By the definition of $\gradlin_t$, we have
\begin{align*}
    &\|\gradlin_t(\nnweight)-\gradlin_t(\nnweight')\|_2\\
    &=\Big\|\nabla_{\nnweight} f(\nnweight_0)\Big(\big(f(\nnweight;s,a)-f(\nnweight';s,a)\big)-\gamma\Big(\max_{b\in\cA}f(\nnweight;s',b)-\max_{b\in\cA}f(\nnweight';s',b)\Big)\Big)\Big\|_2\\
    &\leq (1+\gamma)C_3^2 m\|\nnweight-\nnweight'\|_2,
\end{align*}
which holds with probability at least $1-\layerLen^2\exp(-C_4m\omega^{2/3}\layerLen)$, where we used the fact that the neural network function is Lipschitz with parameter $C_3\sqrt{m}$ by Lemma \ref{lemma:gradient_function_bound}. Similar bound can also be established for $\|\overline\gradlin_t(\nnweight)-\overline\gradlin_t(\nnweight')\|$ in the same way. Note that for $\nnweight\in\BB(\nnweight_0,\omega)$ with $\omega$ and $m$ satisfying the conditions in Lemma \ref{lemma:linearization_error_product}, we have by the definition in \eqref{eq:def_grad_lin} that 
\begin{align}\label{eq:bound_linear_gradient}
   \|\gradlin_t(\nnweight)\|_2&\leq\Big(|\hat f(\nnweight;s,a)|+r(s,a)+\gamma\big|\max_{b}\hat f(\nnweight;s',b)\big|\Big)\|\nabla_{\nnweight}\hat f(\nnweight)\|_2 \notag\\
   &\leq (2+\gamma)(|f(\nnweight_0)|+\|\nabla_{\nnweight}f(\nnweight_0)\|_2\cdot\|\nnweight-\nnweight_0\|_2)\|\nabla_{\nnweight}f(\nnweight_0)\|_2\notag\\
   &\leq (2+\gamma)C_3(C_8\sqrt{m}\sqrt{\log(T/\delta)}+C_3m\omega).
\end{align}
The same bound can be established for $\|\bar\gradlin_t\|$ in a similar way. Therefore, we have $|\zeta_t(\nnweight)-\zeta_t(\nnweight')|\leq \LipschNN\|\nnweight-\nnweight'\|_2$, where $\LipschNN$ is defined as
\begin{align*}
    \LipschNN=2(1+\gamma)C_3^2m\omega+(2+\gamma)C_3(C_8\sqrt{m}\sqrt{\log(T/\delta)}+C_3m\omega).
\end{align*}
Applying the above inequality recursively, for all $\tau=0,\ldots,t$, we have
\begin{align}
    \zeta_t(\nnweight_t)&\leq\zeta_{t}(\nnweight_{t-\tau})+\LipschNN\sum_{i=t-\tau}^{t-1}\|\nnweight_{i+1}-\nnweight_{i}\|_2\notag\\
    &\leq\zeta_{t}(\nnweight_{t-\tau})+(2+\gamma)C_3(C_8\sqrt{m}\sqrt{\log(T/\delta)}+C_3m\omega)\LipschNN\sum_{i=t-\tau}^{t-1}\eta_i.
\end{align}
Next, we need to bound $\zeta_t(\nnweight_{t-\tau})$. Define the observed tuple $O_t=(s_t,a_t,s_{t+1})$ as the collection of the current state and action and the next state. Note that $\nnweight_{t-\tau}\rightarrow s_{t-\tau}\rightarrow s_t\rightarrow O_t$ forms a Markov chain induced by the target policy $\pi$.
Recall that $\gradlin_t(\cdot)$ depends on the observation $O_t$. Let's rewrite $\gradlin(\nnweight,O_t)=\gradlin_t(\nnweight)$. Similarly, we can rewrite $\zeta_t(\nnweight)$ as $\zeta(\nnweight,O_t)$. Let $\nnweight_{t-\tau}'$ and $O_t'$ be independently drawn from the marginal distributions of $\nnweight_{t-\tau}$ and $O_t$ respectively. Applying Lemma \ref{lemma:control_coupling} yields
\begin{align*}
    \EE[\zeta(\nnweight_{t-\tau},O_t)]-\EE[\zeta(\nnweight_{t-\tau}',O_t')]\leq 2 \sup_{\nnweight,O} |\zeta(\nnweight,O)|\lambda\rho^{\tau},
\end{align*}
where we used the uniform mixing result in Assumption \ref{asp:uniform_mixing}. By definition $\nnweight_{t-\tau}'$ and $O_t'$ are independent, which implies $\EE[\gradlin(\nnweight_{t-\tau}',O_t')|\nnweight_{t-\tau}']=\overline\gradlin(\nnweight_{t-\tau}')$ and
\begin{align*}
    \EE[\zeta(\nnweight_{t-\tau}',O_t')]=\EE[\EE[\la\gradlin(\nnweight_{t-\tau}',O_t')-\overline\gradlin(\nnweight_{t-\tau}'),\nnweight_{t-\tau}'-\nnweight^*\ra]|\nnweight_{t-\tau}']=0.
\end{align*}
Moreover, by the definition of $\zeta(\cdot,\cdot)$, we have
\begin{align*}
    |\zeta(\btheta,O)\leq\|\gradlin_t(\btheta)-\overline\gradlin(\btheta)\|_2\cdot\|\btheta-\btheta^*\|_2\leq 2(2+\gamma)C_3(C_8\sqrt{m}\sqrt{\log(T/\delta)}+C_3m\omega)\omega,
\end{align*}
where the second inequality is due to \eqref{eq:bound_linear_gradient} and that $\|\btheta-\btheta^*\|_2\leq\omega$. Therefore, for any $\tau=0,\ldots,t$, we have
\begin{align}\label{eq:bound_markov_zeta}
    \EE[\zeta_t(\nnweight_t)]&\leq\EE\zeta_{t}(\nnweight_{t-\tau})+(2+\gamma)C_3(C_8\sqrt{m}\sqrt{\log(T/\delta)}+C_3m\omega)\LipschNN\sum_{i=t-\tau}^{t-1}\eta_i\notag\\
    &\leq (2+\gamma)C_3(C_8\sqrt{m}\sqrt{\log(T/\delta)}+C_3m\omega)(\omega\lambda\rho^{\tau}+\LipschNN\tau\eta_{t-\tau}).
\end{align}
Define $\tau^*$ as the mixing time of the Markov chain that satisfies
\begin{align*}
    \tau^*=\min\{t=0,1,2,\ldots|\lambda\rho^t\leq\eta_T\}.
\end{align*}
When $t\leq\tau^*$, we choose $\tau=t$ in \eqref{eq:bound_markov_zeta}. Note that $\eta_t$ is nondecreasing. We obtain
\begin{align*}
    \EE[\zeta_t(\nnweight_t)]&\leq\EE[\zeta_t(\nnweight_0)]+2(2+\gamma)C_3(C_8\sqrt{m}\sqrt{\log(T/\delta)}+C_3m\omega)\LipschNN\tau^*\eta_0\\
    &=2(2+\gamma)C_3(C_8\sqrt{m}\sqrt{\log(T/\delta)}+C_3m\omega)\LipschNN\tau^*\eta_0,
\end{align*}
where we used the fact that the initial point $\nnweight_0$ is independent of $\{s_t,a_t,s_{t-1},a_{t-1},\ldots,s_0,a_0\}$ and thus independent of $\zeta_t(\cdot)$. When $t>\tau^*$, we can choose $\tau=\tau^*$ in \eqref{eq:bound_markov_zeta} and obtain
\begin{align*}
    \EE[\zeta_t(\nnweight_t)]&\leq(2+\gamma)C_3(C_8\sqrt{m}\sqrt{\log(T/\delta)}+C_3m\omega)(\omega\eta_T+\LipschNN\tau^*\eta_{t-\tau^*})\\
    &\leq \tilde C(m\log(T/\delta)+m^2\omega^2)\tau^*\eta_{t-\tau^*},
\end{align*}
where $\tilde C>0$ is a universal constant, which completes the proof.
\end{proof}

\subsection{Proof of Lemma \ref{lemma:estimationerr_linear}}
\begin{proof}[Proof of Lemma \ref{lemma:estimationerr_linear}]
To simplify the notation, we use $\EE_{\pi}$ to denote $\EE_{\mu,\pi,\cP}$, namely, the expectation over $s\in\mu,a\sim\pi(\cdot|s)$ and $s'\sim\cP(\cdot|s,a)$, in the rest of the proof. By the definition of $\overline\gradlin$ in \eqref{eq:def_grad_lin}, we have
\begin{align*}
    &\la\overline\gradlin(\nnweight)-\overline\gradlin(\nnweight^*),\nnweight-\nnweight^*\ra\\
    &=\EE_{\mu,\pi,\cP}\big[\big(\hat\Delta(s,a,s';\nnweight)-\hat\Delta(s,a,s';\nnweight^*)\big)\la\nabla_{\nnweight}  f(\nnweight_0;s,a),\nnweight-\nnweight^*\ra\big]\\
    &=\EE_{\mu,\pi,\cP}\big[\big(\hat f(\nnweight;s,a)-\hat f(\nnweight^*;s,a)\big)\la\nabla_{\nnweight}  f(\nnweight_0;s,a),\nnweight-\nnweight^*\ra\big]\\
    &\qquad-\gamma\EE_{\mu,\pi,\cP}\Big[\Big(\max_{b\in\cA}\hat f(\nnweight;s',b)-\max_{b\in\cA}\hat f(\nnweight^*;s',b)\Big)\la\nabla_{\nnweight}  f(\nnweight_0;s,a),\nnweight-\nnweight^*\ra\Big],
\end{align*}
where in the first equation we used the fact that $\nabla_{\nnweight}\hat f(\nnweight)=\nabla_{\nnweight} f(\nnweight_0)$ for all $\nnweight\in\bTheta$ and $\hat f\in\cF_{\bTheta,m}$. Further by the property of the local linearization of $f$ at $\nnweight_0$, we have
\begin{align}\label{eq:linear_difference_product}
    \hat f(\nnweight;s,a)-\hat f(\nnweight^*;s,a)=\la\nabla_{\nnweight} f(\nnweight_0;s,a),\nnweight-\nnweight^*\ra,
\end{align}
which further implies
\begin{align*}
    &\EE_{\mu,\pi,\cP}\big[\big(\hat f(\nnweight;s,a)-\hat f(\nnweight^*;s,a)\big)\la\nabla_{\nnweight}  f(\nnweight_0;s,a),\nnweight-\nnweight^*\ra|\nnweight_0\big]\\
    &=(\nnweight-\nnweight^*)^{\top}\EE\big[\nabla_{\nnweight}  f(\nnweight_0;s,a)\nabla_{\nnweight}  f(\nnweight_0;s,a)^{\top}|\nnweight_0\big](\nnweight-\nnweight^*)\\
    &=m\|\nnweight-\nnweight^*\|_{\bSigma_{\pi}}^2.
\end{align*}
where $ \bSigma_{\pi}$ is defined in Assumption \ref{asp:regularity_target_policy_matrix}. For the other term, we define $b_{\max}^{\nnweight}=\argmax_{b\in\cA} \hat f(\nnweight;s',b)$ and $b_{\max}^{\nnweight^*}=\argmax_{b\in\cA} \hat f(\nnweight^*;s',b)$. Then we have
\begin{align}\label{eq:max_action_dependent_matrix}
    &\EE_{\mu,\pi,\cP}\Big[\Big(\max_{b\in\cA}\hat f(\nnweight;s',b)-\max_{b\in\cA}\hat f(\nnweight^*;s',b)\Big)\la\nabla_{\nnweight}  f(\nnweight_0;s,a),\nnweight-\nnweight^*\ra\Big]\notag\\
    &=\EE_{\mu,\pi,\cP}\big[\big(\hat f(\nnweight;s',b_{\max}^{\nnweight})-\hat f(\nnweight^*;s',b_{\max}^{\nnweight^*})\big)\la\nabla_{\nnweight}  f(\nnweight_0;s,a),\nnweight-\nnweight^{*}\ra\big].
\end{align}
Applying Cauchy-Schwarz inequality, we have
\begin{align*}
    &\EE_{\mu,\pi,\cP}\big[\big(\hat f(\nnweight;s',b_{\max}^{\nnweight})-\hat f(\nnweight^*;s',b_{\max}^{\nnweight^*})\big)\la\nabla_{\nnweight}  f(\nnweight_0;s,a),\nnweight-\nnweight^*\ra\big]\\
    &\leq \sqrt{\EE_{\mu,\pi,\cP}\big[\big(\max_{b} |(\nnweight-\nnweight^*)^{\top}\nabla_{\nnweight}f(\nnweight_0;s',b)|\big)^2\big]}\sqrt{\EE_{\mu,\pi,\cP}\big[\big(\nabla_{\nnweight}f(\nnweight_0;s,a)^{\top}(\nnweight-\nnweight^*)\big)^2\big]}\\
    &= m\|\nnweight-\nnweight^*\|_{\bSigma_{\pi}^*(\nnweight-\nnweight^*)}\|\nnweight-\nnweight^*\|_{\bSigma_{\pi}},    
\end{align*}
where we used the fact that $\bSigma_{\pi}^*(\nnweight-\nnweight^*)=1/m\EE_{\mu,\pi,\cP}[\nabla_{\nnweight}f(\nnweight_0;s,\tilde b_{\max})\nabla_{\nnweight}f(\nnweight_0;s,\tilde b_{\max})^{\top}]$ and $\tilde b_{\max}=\argmax_{b\in\cA}|\la\nabla_{\nnweight}f(\nnweight_0;s,b),\nnweight-\nnweight^*\ra|$ according to  \eqref{eq:def_feature_matrix_max}. Substituting the above results into \eqref{eq:max_action_dependent_matrix}, we obtain
\begin{align*}
    &\EE_{\mu,\pi,\cP}\big[\big(\max_{b\in\cA}\hat f(\nnweight;s',b)-\max_{b\in\cA}\hat f(\nnweight^*;s',b)\big)\la\nabla_{\nnweight}  f(\nnweight_0;s,a),\nnweight-\nnweight^*\ra\big]\\
    &\leq m\|\nnweight-\nnweight^*\|_{\bSigma_{\pi}^*(\nnweight-\nnweight^*)}\|\nnweight-\nnweight^*\|_{\bSigma_{\pi}},
\end{align*}
which immediately implies
\begin{align*}
    \la\overline\gradlin(\nnweight)-\overline\gradlin(\nnweight^*),\nnweight-\nnweight^*\ra
    &\geq m\|\nnweight-\nnweight^*\|_{\bSigma_{\pi}}\cdot\Big(\|\nnweight-\nnweight^*\|_{\bSigma_{\pi}}-\gamma\|\nnweight-\nnweight^*\|_{\bSigma_{\pi}^*(\nnweight-\nnweight^*)}\Big)\\
    &=m\|\nnweight-\nnweight^*\|_{\bSigma_{\pi}}\cdot\frac{\|\nnweight-\nnweight^*\|_{\bSigma_{\pi}}^2-\gamma^2\|\nnweight-\nnweight^*\|_{\bSigma_{\pi}^*(\nnweight-\nnweight^*)}^2}{\|\nnweight-\nnweight^*\|_{\bSigma_{\pi}}+\gamma\|\nnweight-\nnweight^*\|_{\bSigma_{\pi}^*(\nnweight-\nnweight^*)}}\\
    &\geq m(1-\alpha^{-1/2})\|\nnweight-\nnweight^*\|_{\bSigma_{\pi}}^2\\
    &=(1-\alpha^{-1/2})\EE_{\mu,\pi,\cP}\big[\big(\hat f(\nnweight)-\hat f(\nnweight^*)\big)^2|\nnweight_0\big],
\end{align*}
where the second inequality is due to Assumption \ref{asp:regularity_target_policy_matrix} and the last equation is due to \eqref{eq:linear_difference_product} and the definition of $\bSigma_{\pi}$ in \eqref{eq:def_feature_matrix_exp}. 
\end{proof}

\bibliographystyle{ims}
\bibliography{RL}
\end{document}